\title{Analysing  heavy-tail properties of Stochastic Gradient Descent by means of Stochastic Recurrence Equations\ 
    \thanks{E.D. was supported by NCN grant 2019/33/B/ST1/00207. S.M. was supported by DFG grant ME 4473/2-1.}} 
\author{%
  Ewa~Damek\footnote{University of Wroc\l aw, Poland.
    {ewa.damek@math.uni.wroc.pl}}
  \and 
  Sebastian~Mentemeier\footnote{University of Hildesheim,
    Germany. {mentemeier@uni-hildesheim.de} }}
\DeclareMathOperator{\Id}{\mathrm{I}}
\DeclareMathOperator{\Rd}{\mathds{R}^d}
\DeclareMathOperator{\N}{\mathds{N}}
\DeclareMathOperator{\E}{\mathds{E}}
\DeclareMathOperator{\Cf}{\mathcal{C}}
\renewcommand{\S}{S}
\DeclareMathOperator{\Ps}{P^s}
\DeclareMathOperator{\Pst}{P^s_*}
\renewcommand{\P}{\mathbb{P}}
\newcommand{\nust}[1][s]{\nu^*_{#1}}
\newcommand{\nus}[1][s]{\nu_{#1}}
\newcommand{\es}[1][s]{r_{#1}}
\newcommand{\Mset}[1]{\mathrm{GL}(#1,\mathds{R})}
\newcommand{\Sym}[1][d]{\mathrm{Sym}(#1,\mathds{R})}
\newcommand{\supp}{\mathrm{supp}}
\newcommand{\interior}[1]{\mathrm{int}(#1)}
\def\R{{\mathbb{R}}}
\newcommand{\s}{\sigma}
\renewcommand{\a}{\alpha }
\newcommand{\eps}{\varepsilon}
\renewcommand{\t}{\top}
\newcommand{\ip}{(i-p)}
\newcommand{\ipnc}{(i-p-nc)}
\newcommand{\norm}[1]{\|#1 \|}
\newcommand{\abs}[1]{|#1 |}
\newcommand{\as}{\cdot}
\newcommand{\skalar}[1]{\langle #1 \rangle}
\newcommand{\8}{\infty}
\newcommand{\eqdist}{\stackrel{\mathrm{law}}{=}}
\newcommand{\od}{o}   
\newtheorem{theorem}{Theorem}[section]
\newtheorem{corollary}[theorem]{Corollary}
\newtheorem{lemma}[theorem]{Lemma}
\newtheorem{proposition}[theorem]{Proposition}
\theoremstyle{definition}
\newtheorem{remark}[theorem]{Remark}
\begin{document}

\maketitle

\begin{abstract}
	{In recent works on the theory of machine learning, it has been observed that heavy tail properties of Stochastic Gradient Descent (SGD) can be studied in the probabilistic framework of stochastic recursions. In particular, G{\"{u}}rb{\"{u}}zbalaban et al. \cite{Guerbuezbalaban2021} considered a setup corresponding to linear regression for which iterations of SGD can be modelled by a multivariate affine stochastic recursion $X_k=A_k X_{k-1}+B_k$, for independent and identically distributed pairs $(A_k, B_k)$, where $A_k$ is a random symmetric matrix and $B_k$ is a random vector. In this work, we will answer several open questions of the quoted paper and extend their results by applying  the theory of irreducible-proximal (i-p) matrices.}
\end{abstract}

\textbf{Keywords}: Stochastic Recurrence Equation; Stochastic Gradient Descent; Heavy Tail; Products of Random Matrices 

\textbf{MSC}: {60J05; 
	90C15; 
	60B20} 

\section{Introduction}

\subsubsection*{Stochastic Gradient Descent}

Consider the following learning problem: For $d, p \ge 1$ let $\mathscr{H}$ and $\mathscr{Z}$ be subsets of $\R^{d}$ and $\R^{p}$, respectively. We consider $\mathscr{H}$ to be a set of parameters and $\mathscr{Z}$ the domain of the observable data. Let $\ell : \mathscr{H} \times \mathscr{Z} \to [0, \infty)$ be a measurable function, called {\em loss function}. The aim is to minimize the {\em risk function} $L_\mathscr{D}(x):= \E \ell(x,Z)$ where $Z$ is a random variable on $\mathscr{Z}$ that is assumed to generate the data, and the law $\mathscr{D}$ of Z is unknown. 

Upon observing a sample $S=(z_1, \dots, z_m)$ of realisations of $Z$, we can define the {\em empirical risk function} by
$$ L_S(x)~:=~\frac{1}{m} \sum_{i=1}^m \ell(x,z_i).$$
If $\mathscr{H}$ is convex and $\ell$ is differentiable on  $\mathscr{H}$ and convex, we can employ the gradient descent algorithm in order to minize $L_S$: Starting with $x_0 \in \mathscr{H}$, we recursively define for $n \in \N$
$$ x_n ~=~ x_{n-1} - \eta \nabla L_s(x_{n-1}) ~=~ x_{n-1} - \eta \Big( \frac1n \sum_{i=1}^n \nabla \ell(x_{n-1}, z_i) \Big)$$
with a parameter $\eta>0$, called the {\em step size}. The parameter $\eta$ has to be chosen sufficiently small, such that the first-order approximation of the function about $x_{n-1}$ by its gradient is good enough.

In {\em stochastic gradient descent} (see e.g. \cite[Ch. 14]{ShalevShwartz2014}), the idea is to use at each step only a random subsample $(z_{i,n})_{1 \le i \le b}$ of size $b$ of the observations, to calculate the gradient:
\begin{equation}\label{eq:iteration1}
	x_n = x_{n-1} -  \frac{\eta}{b} \sum_{i=1}^b \nabla \ell(x_{n-1},z_{i,n}) ~=:~ x_{n-1} - g_n
\end{equation}

By choosing the subsample at random, the gradient term $g_n$ becomes a random variable. If in each step, a new sample is drawn independently of the previous samples, then the successive gradient terms $(g_n)_{n \ge 1}$ are independent as well. However, the $(g_n)_{n \ge 1}$ are not identically distributed, since the gradients are evaluated at different points $x_{n-1}$ in each step.

To analyze the algorithm, it is of prime interest to study the random distance between $x_n$ and local minima. 
 
In \cite{Guerbuezbalaban2021}, the following particular setting of a quadratic loss function has been considered:
The parameter set is $\mathscr{H}=\R^d$, the observed data $z \in\mathscr{Z}=\R^d \times \R$; and with $z=(a,y)$ the loss function is given by
$$ \ell(x,a,y) = \frac12 (a^\top x-y)^2$$
This corresponds to the setting of linear regression where $y \in \R$ would be the target variable and $a$ the vector of explanatory variables; $L_S(x)$ is then equal to the mean squared error.

In this setting, the function $L_S(x)$ has a unique global minimum $x_*$, and we can expand the gradient of $\ell$ around $x_*$:
\begin{equation}
	\nabla \ell (x,a,y) ~=~ a(a^\top x-y) ~=~ -ya + (aa^\top) x_*+ (aa^\top) (x-x_*) 
\end{equation}
Note that $a$ and $x$ are $d$-dimensional column vectors, $a^\top$ is a row vector, $aa^\top$ is the $(d \times d)$-rank-one projection matrix onto $a$. Using this in \eqref{eq:iteration1}, we obtain for the distance $x_n-x_*$ that
\begin{equation}
	x_n -x_* = x_{n-1} - x_* - \frac{\eta}{b} \sum_{i=1}^b \big(a_i a_i^\top\big) (x_{n-1}-x_*) + \frac{\eta}{b} \sum_{i=1}^b \Big( y_i a_i - (a_i a_i^\top) x_* \Big)
\end{equation}	
Let $\Id$ be the $d\times d$ identity matrix. Denoting 
\begin{equation}\label{eq:ABX.SDG}
	X_n:=x_n-x_*, \qquad A_n:= \big(\Id - \tfrac{\eta}{b} \sum_{i=1}^b a_i a_i^\top\big), \qquad B_n := \tfrac{\eta}{b} \sum_{i=1}^b \big( y_i a_i - (a_i a_i^\top) x_* \big),
\end{equation}
we see that $X_n$ satisfies an {\em affine stochastic recursion}
\begin{equation}\label{eq:SRE1}
	X_n ~=~ A_n X_{n-1} + B_n,
\end{equation}
with $(A_n,B_n)$ being now an independent and identically distributed (i.i.d.) sequence of random matrices and vectors.

\subsubsection*{Summary of results}

It is known (\cite{AM2012,Guivarch2016,Kesten1973}, see also the book \cite{Buraczewski2016}) that under mild assumptions, stationary solutions to such recursive equations will exhibit heavy tails even if the law of $a_i$ has all the moments. This means that there is a substantial probability that iterations of the stochastic gradient descent go far away from the minimum, due to the randomness. 

Here, a random variable $X$ is a stationary solution to \eqref{eq:SRE1} if $X$ has the same law as $A_1X+B_1$, and $X$ is independent of  $(A_1,B_1)$. The main condition for the existence (and uniqueness) of such a stationary solution is that the {\em top Lyapunov exponent} is negative, see below for details. We say that (the law of) $X$ has {\em Pareto tails with tail index $\alpha$}, if there is $c>0$ such that $\lim_{t\to \8}{t^\alpha} \P(|X|>t)=c$.  This is a particular instance of heavy-tail behaviour.  Intriguing questions are: How is the tail behaviour affected by the subsample size $b$, the step size $\eta $ or the distribution generating $(a_i,y)$? What can be said about the law or the moments of finite iterations $X_n$?

In \cite{Guerbuezbalaban2021}, these questions have been studied under a density assumption on the law of $a_i$ and relevant results have been proved when $a_i\sim \mathcal{N}(0,\s ^2I_d)$ is standard normal.  We are going to answer several open questions of that paper and to extend their results beyond the density assumption by applying  the theory of i-p (irreducible-proximal) matrices. In particular, this allows to cover the setup where $a_i$ are samples drawn from a finite set; {\em i.e.}, the case when the law of $a_i$ is supported on a finite set. The tail behaviour of $X$ is described in Theorem \ref{th:GuiLeP}, which we quote from \cite{Guivarch2016}. However, checking its assumptions is not straightforward\footnote{Indeed, there are gaps in \cite{Guerbuezbalaban2021}: In Theorem 2, $M$ does not have a Lebesgue density on $\R^{d^2}$ that is positive in a neighbourhood of the identity matrix $\Id$ since the law of $M$ is concentrated on symmetric matrices, which constitute a $d(d+1)/2$-dimensional subspace, of $d^2$-Lebesgue measure 0. In \cite[Lemma 20]{Guerbuezbalaban2021supp}, the inequality D.18, $\log(\frac{1}{1-x})\le 2x$, is wrong for $x$ close to 1.} since the result requires finiteness of $\E \det(A)^{-\epsilon}$ for some $\epsilon>0$, a property that we will prove in detail in Section \ref{sect:momentbounds} under a density assumption in the law of $A$.

Moreover, we want to consider more general settings where
$A_n = \Id - \tfrac{\eta}{b} \sum_{i=1}^b H_{i,n}$ for i.i.d.\ random symmetric $d \times d$-matrices $H_{i,n}$. This is motivated by the observation (cf. \cite{Guerbuezbalaban2021,Hodgkinson2021}) that for a twice differentiable loss function $\ell$, we may perform a Taylor expansion of the gradient about $x_*$, to obtain
\begin{align}
	x_n - x_* ~\approx&~
	\big( \Id - \frac{\eta}{b}\sum_{i=1}^b H_\ell(x_*,z_{i,n})\big) (x_{n-1} - x_*) - \frac{\eta}{b}\sum_{i=1}^b \nabla \ell(x_*,z_{i,n}) \label{eq:SDGgeneral},
\end{align}
where $H_\ell$ denotes the Hessian of $\ell$. However, one has to keep in mind that such an approximation is in general valid only for $x_n$ close to $x_*$. 

In this general setting we are able to prove that $\E |X_n|^{\a}$ grows linearly with $n$, see Theorem \ref{th:growmoments}, which  considerably improves the result of \cite[Prop. 10]{Guerbuezbalaban2021}.
To obtain a tractable expression for the value of $\alpha$ as well as for the top Lyapunov exponent, we need an additional condition: that the law of $A$ is invariant under rotations, which in particular includes the Gaussian case studied in \cite{Guerbuezbalaban2021}.
We derive a simple formula for the Lyapunov exponent and an appropriate moment generating function, see Lemma \ref{lem:uniform measure}, which  allows us to describe completely the behaviour of $\a $ as a function of $\eta $ and $b$, see Theorem \ref{th:behavioralpha}.

We will introduce all relevant notation, description of the models and assumptions in Section \ref{sect:preliminaries}, in order to state our main results in Section \ref{sect:results}; the proofs of which are then given in the subsequent sections.

\section{Assumptions, Notations and Preliminaries}\label{sect:preliminaries}

We assume that all random variables are defined on a generic probability space $(\Omega, \mathcal F, \P)$.
Let $d \ge 1$. Equip $\R^d$ with the euclidean norm $\abs{\cdot}$ and let $\norm{\cdot}$ be the corresponding operator norm for matrices. In addition, for a $d \times d$-matrix $g$, 
$$ \norm{g}_F:= \Big( \sum_{1\le i,j\le d} g_{ij}^2\Big)^{1/2}$$
denotes the Frobenius norm. We write $\Sym$ for the set of symmetric $d \times d$-matrices,  $\Mset{d}$ for the group of invertible $d \times d$-matrices over $\R$ and $O(d)$ for its subgroup of orthogonal matrices. The identity matrix is denoted $\Id$.  Write $S:=S^{d-1}$ for the unit sphere in $\R^d$ with respect to $\abs{\cdot}$.
For $x \in S$ and $g \in \Mset{d}$, define the action of $g$ on $x$ by 
$$g \cdot x:= \frac{gx}{\abs{gx}} \in S.$$
We fix an orthonormal basis $e_1, \dots, e_d$ of $\R^d$. We write $\Cf(S)$ for the set of continuous functions on $S$. The uniform measure $\sigma $ on $S$ is defined by
\begin{equation}\label{eq:uniform.measure}
	\int _Sf(y)\ d\sigma (y):= \int _{O(d)} f(\od\cdot e_1)\ d\od, \qquad \text{for all }f \in \Cf(S),
\end{equation}
where $d\od $ is the Haar measure on $O(d)$ normalized to have mass 1.
For $g \in \Mset{d}$ we introduce the quantity \begin{equation} \label{eq:Ng}
	N(g):=\max\{\norm{g},\norm{g^{-1}}\}.
\end{equation}
For an intervall $I \subset \R$, the set of continuously differentiable functions on $I$ is denoted $\Cf^{1}(I)$. For a random variable $X$, we denote by $\supp(X)$ its support, i.e., the smallest closed subset $E$ of its range with $\P(X \in E)=1$. We also use this notation for measures, then the support is the smallest closed set of full measure.

\subsubsection*{Stochastic Recurrence Equations}
Given a sequence of i.i.d. copies $(A_k,B_k)_{k \ge 1}$ of a random pair $(A,B) \in \Mset{d} \times \Rd$, which are also independent of the random vector $X_0 \in \R^d$, we consider the sequence of random $d$-vectors $(X_k)_{k \ge 1}$ defined by the affine stochastic recurrence equation
\begin{equation}\label{eq:SRE}
	X_k ~=~ A_k X_{k-1} + B_k, \qquad k \ge 1.
\end{equation}
The study of the equation \eqref{eq:SRE} goes back to \cite{Kesten1973}, see the book \cite{Buraczewski2016} for a comprehensive account. Let $\Pi_n := \prod_{i=1}^n A_1 \cdots A_n$. Assume $\E \log^+ \norm{A} < \infty$, $\E \log^+ \abs{B}<\infty$ and that the {\em top Lyapunov exponent}
\begin{equation}\label{eq:lyapunov exp}
	\gamma~:=~\lim_{n \to \infty} \frac{1}{n} \log \norm{\Pi_n}  \quad \text{a.s.}
\end{equation}
is negative. Then there exist a unique stationary distribution for the Markov chain defined by \eqref{eq:SRE} and its law is given by the then almost surely convergent series
\begin{equation}\label{eq:R}
	R:= \lim_{n \to \infty}	R_n ~:=~ \lim_{n \to \infty} \sum_{k=1}^n \Pi_{k-1} B_k \quad \text{a.s.}
\end{equation}
see \cite[Theorem 4.1.4]{Buraczewski2016}. Indeed, let us write
$$ R^{(1)}~:=~ \lim_{n \to \infty} R_n^{(1)} ~:=~ \lim_{n \to \infty} \sum_{k=1}^n A_2 \cdots A_{k} B_{k+1},$$
{\em i.e.}, we shift all indices by 1. Then $R_n \eqdist R_n^{(1)}$, $R \eqdist R^{(1)}$ and 
$$ R_{n+1}=A_1 R_n^{(1)} +B_1.$$
Hence (upon taking limits $n\to \infty$), we see that $R$ satisfies the distributional equation $R \eqdist A R + B$, where $A,B$ are independent of $R$. Let $$ I_k:= \{ s \ge 0 \, : \, \E \norm{A}^s < \infty \} \quad \text{and} \quad s_0:=\sup I_k.$$ For any $s \in I_k$  define the quantity
\begin{equation}\label{def:ks}
	k(s) ~:=~ \lim_{n \to \infty} \left(\E{\norm{A_n \ldots A_1 }^s} \right)^\frac{1}{n}. 
\end{equation}
Considering Eq. \eqref{eq:R}, a simple calculation gives that $k(s)<1$ together with $\E \abs{B}^s<\infty$ implies $\E \abs{R}^s < \infty$ (see also \cite[Remark 4.4.3]{Buraczewski2016}).

\subsubsection*{The Setup: $A = \Id - \xi H$ for $H \in \Sym$}

Motivated by the application to stochastic gradient descent, we want to study matrices $A$ of the form $A=\Id-\xi H$, where $\xi >0$ and $H$ is a random symmetric matrix. More specifically, we want to consider the following two models, both for $b \in \N$ and $\eta>0$. Firstly, a setup where $B$ is arbitrary and $A$ is composed from random symmetric matrices. Given  a tupel $(H_i)_{1\le i \le b}$ of i.i.d.\ random symmetric $d \times d$-matrices, let
\begin{equation}\label{Symm}\tag{Symm}
	A ~=~ \Id - \frac{\eta}{b} \sum_{i=1}^b H_i, \qquad B \text{ a random vector in } \R^d. 
\end{equation}
Secondly, we consider a setup where $A$ is composed from rank-one projections and $B$ is a weighted sum of the corresponding eigenvectors.
\begin{equation}\label{Rank1}\tag{Rank1}
	A =\Id - \frac{\eta}{b} \sum_{i =1}^b a_i a_i^\t, \qquad B ~=~ \frac{\eta}{b} \sum_{i=1}^b a_i y_i
\end{equation}
where $(a_i,y_i)$ are  i.i.d. pairs in $\R^d \times \R$.
Of course, model \eqref{Symm} contains model \eqref{Rank1} as a particular case. Therefore, we are going to study a general model wherever possible, and only restrict to the particular case (that still covers \cite{Guerbuezbalaban2021} completely) when this is necessary to obtain more precise results.

\subsubsection*{Geometric Assumptions}
Let $\mu_A$ be the law of $A$ and let $G_A$ denote the closed semigroup generated by the $\supp(\mu_A)$.  It will be required that $G_A \subset GL(d,\R)$, {\em i.e.}, the matrix $A$ is invertible with probability 1. This is satisfied {\em e.g} when $\mu_A$ has a density w.r.t.\ Lebesgue measure on the set of symmetric matrices: the set of matrices with determinant 0 is of lower dimension.

Here and below, a matrix is called {\em proximal}, if it has a unique largest eigenvalue (with respect to the absolute value) and the corresponding eigenspace is one-dimensional.
We say that $\mu_A$ satisfies an irreducible-proximal condition (short: condition \ip), if the following holds.
\begin{enumerate}
	\item {\em Irreducibility condition}: There exists no finite union $\mathcal{W}=\bigcup_{i=1}^n W_i$ of proper subspaces $W_i \subsetneq \Rd$ which is $G_A$-invariant.
	\item {\em Proximality condition}: $G_A$ contains a proximal matrix.
\end{enumerate}
The irreducibility condition does not exclude a setting where $G_A$ leaves invariant a cone (for example, if all matrices are nonnegative in addition), see \cite[Proposition 2.14]{Guivarch2016}. Since this setting is not relevant to the applications that we have in mind, we will exclude it and consider the following assumption in all of the subsequent results:
\begin{equation}\label{eq:ipnc}
	G_A \subset GL(d,\R),\ \mu_A \text{ satisfies \ip\ and there is no $G_A$-invariant cone} \tag{i-p-nc}
\end{equation}

For some results, we will require more specific assumptions which we describe now.
We say that $\mu _A$ is {\em invariant under rotations} if for every $\od\in O(d)$, $\od H\od^T$ has the same distribution law as $H$. We will make use of the following condition.
\begin{equation}\tag{rotinv-p}\label{rotinv}
	G_A \subset GL(d,\R),\ \mu_A \text{ is invariant under rotations and } G_A \text{  contains a proximal matrix }
\end{equation}
For model \eqref{Rank1}, we can reformulate condition \eqref{rotinv} in terms of $a_i$ by the following observation.
If a random vector $a\in \R ^d$ has the property that for every $\od \in O(d)$, $ka$ has the same law as $a$, then  then the law of $H:=aa^T$ is invariant under rotations.

We will sometimes assume a Gaussian distribution for $a_i$'s.
\begin{equation}\tag{Rank1Gauss}\label{Rank1Gauss}
	A, B \text{ are of the form } \eqref{Rank1},  a_i \sim \mathcal{N}(0,\Id) \text{ and independent of } y_i \sim \mathcal{N}(0,1)
\end{equation}

Note that \eqref{Rank1Gauss} is assumed throughout in \cite{Guerbuezbalaban2021}; we will show that indeed all their results hold at least under \eqref{rotinv}, some already under \eqref{eq:ipnc}. 

\subsubsection*{Assumptions on Moments and Decay}

Certain results will require finiteness of $\E \log N(A)$ or $\E N(A)^\epsilon$ for some $\epsilon>0$, which means in particular that we require finiteness of small (or logarithmic) moments of $\norm{A^{-1}}$ (see \eqref{eq:Ng}). Such a property is difficult to check in general. This is why we provide two sufficient conditions, formulated in terms of densities.
The first one is for the model \eqref{Symm}, where $A=I-\xi H:=I-\frac{\eta}{b}\sum_{i=1}^b H_i$.
\begin{align}\notag
	&\text{The law of $H$ has a density $f$ w.r.t. the Lebesgue measure on  $\Sym$ that satisfies} \\  & \qquad f(g)\leq  C\left ( 1+\norm{g}_F^2\right )^{-D} \qquad \text{
		for some $D>\frac{d(d+1)}{4}$} \label{decay}\tag{Decay:Symm}
\end{align}
The second one is for the model \eqref{Rank1}.
\begin{align}\label{decay1}\tag{Decay:Rank1}
	\text{The law of $a_1,..,a_b$ on $\R ^{db}$ has a joint density $f$ that satisfies} \\
	f(a_1,...,a_b)\leq C\left ( 1+\sum _i^b| a_i| ^2\right )^{-D} \qquad \text{	for some $D>\frac{db}{2}$.}  \notag
\end{align}
Under \eqref{decay} or \eqref{decay1}, small moments of $\| A^{-1}\| $ are finite which simplifies the assumptions, which is desirable from the point of view of applications.

\subsubsection*{Preliminaries}

For $s \in I_k$, we define the operators $\Ps$ and $\Pst$  in $\Cf(S)$ as follows: For any $f \in \Cf(S)$,
\begin{equation}\label{eq:Ps}  \Ps f(x) := \E \abs{A x}^s \, f(A \as x), \quad  \qquad  \Pst f(x) := \E \abs{A^\t x}^s \, f(A^\top \as x) . \end{equation}
Properties of both operators will be important in our results. 
\begin{proposition}\label{prop:transferoperators}
	Assume that $\mu_A$ satisfies \ipnc\ and let $s \in I_k$. Then the following holds.
	The spectral radii $\rho(\Ps)$ and $\rho(\Pst)$ both equal $k(s)$ and there is a  unique probability measure $\nus$ on $\S$ and a unique function $\es \in \Cf(\S)$ satisfying 
	$$ \int \es(x)\, \nus(dx)=1, \quad \Ps \es = k(s) \es \quad \text{ and } \quad \Ps \nus = k(s) \nus.$$
	Further, the function $\es$ is strictly positive.
	Also, there is a unique probability measure $\nust$  satisfying $\Pst \nust = k(s) \nust$ and both the $\supp(\nus)$ and $\supp(\nust)$ are not concentrated on any hyperplane.  There  is $c>0$ such that 
	$$ \es(x) ~=~ c \int_S \abs{\skalar{x,y}}^s \nust(dy).$$
	The function $s \mapsto k(s)$ is log-convex on $I_k$, hence continuous on $\interior{I_k}$ with left- and right derivatives and there is a constant $C_s$ such that for all $m \in \N$,
	$$\E \norm{\Pi_m}^s \le C_sk(s)^m.$$ 
\end{proposition}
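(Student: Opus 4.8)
The plan is to construct the eigen-elements first by soft compactness arguments, then to identify every relevant eigenvalue with $k(s)$, and only afterwards to upgrade existence to uniqueness; the formula for $\es$, its positivity, the log-convexity of $k$ and the bound on $\E\norm{\Pi_m}^s$ are then short corollaries. I would start from the iterates. For $s\in I_k$, $\Ps$ and $\Pst$ are bounded \emph{positive} operators on $\Cf(\S)$, and the cocycle identity $\abs{A(g\cdot x)}^s=\abs{Agx}^s\abs{gx}^{-s}$ together with $(gh)\cdot x=g\cdot(h\cdot x)$ gives by induction $(\Ps)^nf(x)=\E[\abs{A_n\cdots A_1 x}^s f((A_n\cdots A_1)\cdot x)]$; in particular $\norm{(\Ps)^n}=\norm{(\Ps)^n\mathbf 1}_\infty=\sup_{x\in\S}\E\abs{A_n\cdots A_1 x}^s\le\E\norm{A_n\cdots A_1}^s$, and the same holds for $\Pst$ with the $A_i^\top$. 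Since $\norm{A_n^\top\cdots A_1^\top}=\norm{A_1\cdots A_n}\eqdist\norm{A_n\cdots A_1}$, the function $k(\cdot)$ is common to $\mu_A$ and $\mu_{A^\top}$, and $\mu_{A^\top}$ again satisfies \ipnc\ (pass to orthogonal complements for the forbidden subspaces and to dual cones); hence it suffices to treat $\Ps$, the statements on $\Pst$ and $\nust$ following by applying the result to $A^\top$.

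Existence and the value $k(s)$: by Schauder--Tychonoff applied to the normalised dual map $\nu\mapsto(\Ps)^*\nu/((\Ps)^*\nu)(\S)$ on the weak-$*$ compact convex set of probability measures on $\S$, there are a probability measure $\nus$ and $\lambda>0$ with $(\Ps)^*\nus=\lambda\nus$ (the assertion $\Ps\nus=k(s)\nus$ once $\lambda=k(s)$), and similarly $\nust,\lambda^*$ for $\Pst$. Irreducibility forces $\supp\nus$ to span $\R^d$: otherwise, testing $(\Ps)^*\nus=\lambda\nus$ against continuous $f\ge0$ vanishing on $\supp\nus$ gives, using that $A$ is invertible a.s., $A\cdot\supp\nus\subset\supp\nus$ a.s., hence $AW_0\subset W_0$ for $W_0:=\mathrm{span}(\supp\nus)$ and all $A\in\supp\mu_A$, contradicting the absence of a proper $G_A$-invariant subspace; likewise for $\nust$, so neither support lies in a hyperplane. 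Consequently $w(x):=\int_\S\abs{\skalar{x,y}}^s\,\nust(dy)$ is continuous, lies in a Hölder class $\Cf^\gamma(\S)$, and is strictly positive, and a direct computation with $\abs{\skalar{x,A^\top y}}^s=\abs{A^\top y}^s\abs{\skalar{x,A^\top\cdot y}}^s$ and Fubini shows $\Ps w=\lambda^* w$, so $w$ is a positive eigenfunction of $\Ps$; integrating $\Ps w$ against $\nus$ gives $\lambda=\lambda^*$. Finally $\lambda=k(s)$: from $\int(\Ps)^n\mathbf 1\,d\nus=\lambda^n$ and the pointwise bounds $c_0\norm{g}^s\le\int_\S\abs{gx}^s\nus(dx)\le\norm{g}^s$ with $c_0:=\min_{v\in\S}\int_\S\abs{\skalar{v,x}}^s\nus(dx)>0$ (positive because $\supp\nus$ spans), one gets $c_0\E\norm{A_n\cdots A_1}^s\le\lambda^n\le\E\norm{A_n\cdots A_1}^s$, whence $\lambda=\lim_n(\E\norm{A_n\cdots A_1}^s)^{1/n}=k(s)$, the limit existing by subadditivity of $n\mapsto\log\E\norm{A_n\cdots A_1}^s$. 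Together with the positive eigenfunction $w$ this forces $\rho(\Ps)=k(s)$, hence also $\rho(\Pst)=k(s)$.

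Uniqueness is the main obstacle. I would pass to the Doob-type $\es$-transform $Qf:=(k(s)\es)^{-1}\Ps(\es f)$ with $\es:=c\,w$, which is a Markov operator on $\Cf(\S)$, and invoke the fundamental fact of i-p theory (Furstenberg; Guivarc'h--Raugi; Le Page; Guivarc'h--Le Page): under \ipnc, $Q$ is uniquely ergodic and has a spectral gap on a Hölder space $\Cf^\gamma(\S)$, proximality supplying the contraction of the projective action and irreducibility the non-degeneracy. Unique ergodicity yields a unique $Q$-invariant probability $\pi$; since a measure $\mu$ is a $(\Ps)^*$-eigenmeasure iff $\es\mu$ is $Q$-invariant, every such $\mu$ is a scalar multiple of $\es^{-1}\pi$, and normalising to a probability measure identifies $\nus$ uniquely (then rescaling $w$ so that $\int\es\,d\nus=1$ pins down $\es$). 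The spectral gap makes every bounded $Q$-harmonic function constant, and $\es'/\es$ is $Q$-harmonic for any competing positive eigenfunction $\es'$ (all of which have eigenvalue $k(s)$, by integrating $\Ps\es'$ against $\nus$), so $\es'\in\R\es$. This is the only step where mere existence is genuinely promoted to uniqueness; everything else is formal.

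The remaining claims are then immediate. The identity $\es=c\,w$ is the stated formula $\es(x)=c\int_\S\abs{\skalar{x,y}}^s\,\nust(dy)$; positivity of $\es$ and non-degeneracy of $\supp\nus$ and $\supp\nust$ were established above. For log-convexity, each $s\mapsto\log\E\norm{A_n\cdots A_1}^s=\log\E e^{s\log\norm{A_n\cdots A_1}}$ is a convex function of $s$, hence so is the pointwise limit $\log k(s)=\lim_n n^{-1}\log\E\norm{A_n\cdots A_1}^s$; a convex function is continuous on $\interior{I_k}$ and has finite left- and right derivatives there. Finally $\norm{g}^s\le\norm{g}_F^s\le C_{d,s}\sum_{i=1}^d\abs{ge_i}^s$ yields $\E\norm{\Pi_m}^s\le C_{d,s}\sum_{i=1}^d(\Ps)^m\mathbf 1(e_i)$, and since $\mathbf 1\le(\min_\S\es)^{-1}\es$ and $(\Ps)^m\es=k(s)^m\es$ we get $(\Ps)^m\mathbf 1\le(\max_\S\es/\min_\S\es)k(s)^m$ pointwise, so $\E\norm{\Pi_m}^s\le C_s k(s)^m$ with $C_s:=C_{d,s}\,d\,(\max_\S\es/\min_\S\es)$.
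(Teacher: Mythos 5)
The paper itself offers no proof: the item labelled ``Source'' is a pointer to Guivarc'h--Le Page \cite{Guivarch2016} (Theorems 2.6, 2.16 and Lemma 2.8). Your write-up is therefore a reconstruction rather than a transcription, and it is a faithful one: the architecture you present --- produce an eigen-probability by Schauder--Tychonoff on the normalised adjoint map, use strong irreducibility to show the support spans and hence to trap the eigenvalue between $c_0\E\norm{\Pi_n}^s$ and $\E\norm{\Pi_n}^s$, manufacture the positive eigenfunction as $w(x)=\int_S|\langle x,y\rangle|^s\,\nust(dy)$, reduce uniqueness to unique ergodicity (equivalently, quasi-compactness on a H\"older space) of the Doob $\es$-transformed Markov operator, and then harvest the corollaries --- is exactly the route of the cited paper. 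Each of the ``soft'' steps is verifiably correct: the cocycle gives $\norm{(\Ps)^n}=\norm{(\Ps)^n\mathbf 1}_\infty$, Fubini and the identity $|\langle x,A^\top y\rangle|^s=|A^\top y|^s|\langle x,A^\top\cdot y\rangle|^s$ show $\Ps w=\lambda^* w$, integrating against $\nus$ forces $\lambda=\lambda^*$, and the positive eigenfunction together with the two-sided estimate pins $\rho(\Ps)=\rho(\Pst)=k(s)$; the log-convexity of $k$ and the bound $\E\norm{\Pi_m}^s\le C_s k(s)^m$ follow exactly as you say. The one step where content is genuinely non-trivial --- unique ergodicity / spectral gap of $Q$ under \eqref{eq:ipnc} --- you correctly identify and correctly attribute to Furstenberg, Guivarc'h--Raugi, Le Page and Guivarc'h--Le Page; that is precisely the result the paper is importing. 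Two small points to tighten if you were to write this out in full: when passing from $\mu_A$ to $\mu_{A^\top}$, the check that a $G_{A^\top}$-invariant finite union $\bigcup W_i$ gives a $G_A$-invariant finite union $\bigcup W_i^\perp$ uses that an invertible map sends a subspace into a finite union only if it lands in a single summand, so $A^\top$ permutes the maximal $W_i$; and the dual-cone argument for the ``nc'' part should specify the class of cones involved so that properness passes to the dual. Neither is a gap in the plan, just detail to be filled in.
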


\begin{proof}[Source]
	This is a combination of \cite[Theorem 2.6]{Guivarch2016} and \cite[Theorem 2.16]{Guivarch2016}; where \ipnc\ corresponds to Case I. The results concerning the supports as well as the bound for $k(s)$ are proved in \cite[Lemma 2.8]{Guivarch2016}.
\end{proof}

\begin{proposition}\label{prop:diffk}
	Assume \ipnc\ and that 
	\begin{equation}
		\E (1+\| A\| ^s)\log N(A)<\8 
	\end{equation} 
	for some $s>0$.
	Then $k$ is continuously differentiable on $[0,s]$ with $k'(0)=\gamma$. 
\end{proposition}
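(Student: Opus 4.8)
The plan is to differentiate the eigen-relations of Proposition~\ref{prop:transferoperators} in the parameter, the moment assumption being exactly what makes differentiation under the expectation legitimate on all of $[0,s]$. The starting observation is that for $t\in[0,s]$ and $x\in\S$ one has $\norm{A^{-1}}^{-1}\le|Ax|\le\norm{A}$, hence $|Ax|^t\le 1+\norm{A}^s$ and, since $\tfrac{d}{dt}|Ax|^t=|Ax|^t\log|Ax|$, also $\bigl||Ax|^t\log|Ax|\bigr|\le(1+\norm{A}^s)\log N(A)$; the identical bounds hold with $A^\t$ in place of $A$. As $\E(1+\norm{A}^s)\log N(A)<\8$ by hypothesis, a mean value theorem plus dominated convergence argument shows that $t\mapsto P^t$ and $t\mapsto P^t_*$ are differentiable families of bounded operators on $\Cf(\S)$, with derivative $\dot P^t f(x)=\E\bigl[|Ax|^t\log|Ax|\,f(A\as x)\bigr]$; moreover, for each fixed $f\in\Cf(\S)$, $t\mapsto P^t f$ is Lipschitz in the supremum norm on $[0,s]$ with constant $\norm{f}_\8\,\E[(1+\norm{A}^s)\log N(A)]$.

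\emph{Step 1: continuity of the spectral data on $[0,s]$.} I would show that $t\mapsto\nus[t]$ and $t\mapsto\nust[t]$ are weak-$*$ continuous and $t\mapsto\es[t]$ is $\norm{\cdot}_\8$-continuous on $[0,s]$. Given $t_n\to\tau$, extract a weak-$*$ limit $\nus[t_n]\to\rho$; since $P^{t_n}g\to P^\tau g$ uniformly (by the Lipschitz bound above) and $k(t_n)\to k(\tau)$, passing to the limit in $\int P^{t_n}g\,d\nus[t_n]=k(t_n)\int g\,d\nus[t_n]$ gives $\int P^\tau g\,d\rho=k(\tau)\int g\,d\rho$ for all $g\in\Cf(\S)$, so $\rho$ is a $P^\tau$-eigenmeasure, hence $\rho=\nus[\tau]$ by the uniqueness in Proposition~\ref{prop:transferoperators}; thus $\nus[t]\to\nus[\tau]$, and likewise $\nust[t]\to\nust[\tau]$. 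For the eigenfunctions I would use the representation $\es[t](x)=c_t\int_\S|\skalar{x,y}|^t\,\nust[t](dy)$, the constant $c_t$ being fixed by $\int\es[t]\,d\nus[t]=1$; weak-$*$ convergence of $\nust[t]$ and $\nus[t]$ then forces $c_t\to c_\tau$ and $\es[t]\to\es[\tau]$ uniformly. The delicate case is $\tau=0$, where the kernel $|\skalar{x,y}|^t$ tends to $\mathds{1}_{\skalar{x,y}\neq0}$ rather than to $1$; here one uses that $\nust[0]$ charges no hyperplane --- a consequence of the fact that the irreducibility required in \ipnc\ is \emph{strong} irreducibility, under which every $\mu_A$-stationary measure on $\S$ is proper --- together with the monotonicity of $t\mapsto|\skalar{x,y}|^t$ on $\S\times\S$, to conclude $\es[t]\to\es[0]\equiv\mathds{1}$.

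\emph{Step 2: derivative formula and conclusion.} Fix $\tau\in[0,s]$. Integrating the identity $k(t)\es[t]=P^t\es[t]$ against $\nus[\tau]$ and using $\int P^\tau g\,d\nus[\tau]=k(\tau)\int g\,d\nus[\tau]$ yields, for $t\neq\tau$,
\[
  \frac{k(t)-k(\tau)}{t-\tau}\,\int\es[t]\,d\nus[\tau]\;=\;\int\!\E\!\left[\frac{|Ax|^t-|Ax|^\tau}{t-\tau}\,\es[t](A\as x)\right]\nus[\tau](dx).
\]
Letting $t\to\tau$: the factor $\int\es[t]\,d\nus[\tau]\to\int\es[\tau]\,d\nus[\tau]=1$ by Step~1, and the right-hand side converges to $\int\dot P^\tau\es[\tau]\,d\nus[\tau]$ by dominated convergence against the \emph{fixed} measure $\P\otimes\nus[\tau]$, with dominating function $(1+\norm{A}^s)\log N(A)\cdot\sup_{u\in[0,s]}\norm{\es[u]}_\8$ (finite by Step~1 and compactness of $[0,s]$). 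Hence $k$ has, at every $\tau\in[0,s]$, the (one-sided, at the endpoints) derivative
\[
  k'(\tau)\;=\;\int_\S\E\bigl[|Ax|^\tau\log|Ax|\,\es[\tau](A\as x)\bigr]\,\nus[\tau](dx).
\]
Now $\log k$ is convex on $[0,s]$ by Proposition~\ref{prop:transferoperators} and, by the above, differentiable at every point of $[0,s]$; a finite convex function differentiable throughout a closed interval is automatically $\Cf^1$ there (its derivative is monotone with no jumps), so $\log k\in\Cf^1([0,s])$ and therefore $k=e^{\log k}\in\Cf^1([0,s])$. Finally, at $\tau=0$ we have $k(0)=1$, $\es[0]\equiv\mathds{1}$, and $\nus[0]$ is the unique $\mu_A$-stationary probability measure on $\S$, so the formula reads $k'(0)=\int_\S\!\int\log|gx|\,\mu_A(dg)\,\nus[0](dx)$, which equals the top Lyapunov exponent $\gamma$ by Furstenberg's formula (valid since $\mu_A$ is strongly irreducible and $\E\log N(A)<\8$).

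The step I expect to be the main obstacle is Step~1: establishing joint continuity of $\bigl(k(t),\nus[t],\nust[t],\es[t]\bigr)$ up to and including the endpoints, and in particular controlling $\es[t]$ as $t\to 0$, where the kernel $|\skalar{x,y}|^t$ degenerates and properness of the stationary measures must be invoked. A minor additional point is the continuity of $k$ at the right endpoint $t=s$ when $s=\sup I_k$, which follows from log-convexity together with $\E\norm{A}^s<\8$.
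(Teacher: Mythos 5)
The paper's ``proof'' of Proposition~\ref{prop:diffk} is simply a citation to \cite[Theorem~3.10]{Guivarch2016}; it gives no independent argument. Your proposal is therefore a genuinely self-contained attempt, and its overall architecture is sound and in the spirit of what Guivarc'h--Le Page actually do: use the moment hypothesis $\E(1+\norm{A}^s)\log N(A)<\8$ (together with $N(A)^{-1}\le|Ax|\le N(A)$ for $x\in\S$) to differentiate the eigen-relations of Proposition~\ref{prop:transferoperators} under the expectation, pass to the limit using continuity of the spectral data enforced by uniqueness, and then upgrade pointwise differentiability of the convex function $\log k$ to $\Cf^1$ by the standard fact that a convex function with a derivative at every point of an interval has a continuous (monotone, jump-free) derivative. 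The identification $k'(0)=\gamma$ via Furstenberg's formula at the point $\tau=0$, where $\nus[0]$ is the unique $\mu_A$-stationary measure and $\es[0]\equiv\mathds 1$, is correct.

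The place where your argument is not yet a proof is the one you yourself flag: the uniform convergence $\es[t]\to\mathds 1$ as $t\to 0^+$. Two points need to be made precise. First, Proposition~\ref{prop:transferoperators} only states that $\supp(\nust[t])$ is not contained in a hyperplane; what you actually need is the stronger \emph{properness} of $\nust[0]$, i.e.\ $\nust[0](\{y:\skalar{x,y}=0\})=0$ for every $x$. This does hold, but as a consequence of strong irreducibility for the $\mu_{A^\t}$-stationary measure $\nust[0]$, not because of anything stated in Proposition~\ref{prop:transferoperators}; and note that the ``stationary measures are proper'' fact applies directly only to $\nust[0]$, not to $\nust[t]$ for $t>0$ (those are eigenmeasures of a twisted operator, not stationary measures of a Markov chain). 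Second, ``monotonicity of $t\mapsto|\skalar{x,y}|^t$'' does not by itself yield uniformity in $x$; you need a compactness argument. One that works: writing $\varphi_t(x)=\int_S|\skalar{x,y}|^t\,\nust[t](dy)$, you have $\varphi_t\le 1$ and $\varphi_t(x)\ge\eps^t\bigl(1-\nust[t](\{y:|\skalar{x,y}|<\eps\})\bigr)$. If $\sup_x\nust[t](\{|\skalar{x,\cdot}|<\eps\})$ did not tend to $0$ as $(t,\eps)\to(0,0)$, extract $t_n\to0$, $\eps_n\to0$, $x_n\to x_\8$ with $\nust[t_n](\{|\skalar{x_n,\cdot}|\le\eps_n\})\ge\delta>0$; since $\nust[t_n]\to\nust[0]$ weak-$*$ (by Step~1 uniqueness) and $\{|\skalar{x_n,\cdot}|\le\eps_n\}\subset\{|\skalar{x_\8,\cdot}|\le\eta\}$ eventually for any $\eta>0$, upper semicontinuity on closed sets gives $\nust[0](\{|\skalar{x_\8,\cdot}|\le\eta\})\ge\delta$, and letting $\eta\to0$ contradicts properness. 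With that lemma in place, $\varphi_t\to1$ uniformly, $c_t\to1$, and your Step~1 (hence the whole argument) closes. Without it, the most delicate step of the proof remains a gap.
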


\begin{proof}[Source]
	\cite[Theorem 3.10]{Guivarch2016}
\end{proof}

\begin{remark}
	Hence, a sufficient condition for $\gamma <0$ is $\E \norm{A_1}<1$ (together with $\E (1+\norm{A}) \log N(A)<\infty$): Observing that $k(0)=1$ and $k(1)\le \E \norm{A_1}<1$ by submultiplicativity of the norm, the convexity of $k$ (see Prop. \ref{prop:transferoperators}) implies that  $k'(0)=\gamma<0$.
\end{remark}

\section{Main results}\label{sect:results}

In this section, we state our main results, the proofs of which are deferred to the subsequent sections. We formulate all results with the minimal set of assumptions that is required; note that these assumptions are in particular satisfied for model \eqref{Rank1Gauss}.

\subsection{Heavy tail properties}

For completeness, we start by quoting the fundamental result about the tail behavior of $R$ from \cite{Guivarch2016} and adopt it to our notation. 

\begin{theorem}\label{th:GuiLeP}
	Assume \eqref{eq:ipnc}, $\gamma <0$ and $P(Ax+B=x)<1$ for all $x \in \R^d$. Assume further that there is $\alpha \in \mathrm{interior}(I_k)$ with $k(\a)=1$, $\E |B|^{\alpha + \epsilon} < \infty$ and $\E \norm{A}^{\a } N(A)^\epsilon < \infty$ for some $\epsilon>0$. Then there is $C>0$ such that
	$$ \lim_{t \to \infty} t^\alpha \P \left( |R|>t, \frac{R}{|R|} \in \cdot \right) ~=~ C\, \nu_\alpha(\cdot).$$
	
\end{theorem}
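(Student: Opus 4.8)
Since this statement is quoted verbatim from \cite{Guivarch2016}, I only indicate the line of argument I would follow; it is a multivariate, directional incarnation of Kesten--Goldie-type renewal theory. Fix $\alpha\in\interior{I_k}$ with $k(\alpha)=1$ and let $\es[\alpha]$ and $\nu_\alpha$ be the (strictly positive) eigenfunction and the eigenmeasure from Proposition \ref{prop:transferoperators}. First I would record two structural facts. The hypothesis $\E\norm{A}^{\alpha}N(A)^{\epsilon}<\8$ implies $\E(1+\norm{A}^{\alpha})\log N(A)<\8$, so Proposition \ref{prop:diffk} gives $k\in\Cf^{1}[0,\alpha]$; since $k$ is log-convex on $I_k$ with $k(0)=k(\alpha)=1$ and $k'(0)=\gamma<0$ (in particular $k$ is not constant on $[0,\alpha]$), this forces $k'(\alpha)>0$, which is the ``positive drift'' the renewal argument will need. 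Second, under \eqref{eq:ipnc} the transfer operators $P^{s}$ (for $s$ near $\alpha$) have a spectral gap on a suitable space of Hölder functions on $\S$ — this is what underlies Proposition \ref{prop:transferoperators} and is what makes the change of measure below effective.

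\smallskip
\noindent\emph{Change of measure on the sphere.} For $x\in\S$ I would define a probability measure $\mathbb{Q}_x$ on the path space of $(A_k)_{k\ge1}$ by prescribing, on $\sigma(A_1,\dots,A_n)$,
\[ \frac{d\mathbb{Q}_x}{d\P}\Big|_{\sigma(A_1,\dots,A_n)} ~=~ \frac{\abs{\Pi_n x}^{\alpha}\,\es[\alpha](\Pi_n\as x)}{\es[\alpha](x)}. \]
The relation $P^{\alpha}\es[\alpha]=k(\alpha)\es[\alpha]=\es[\alpha]$ (Proposition \ref{prop:transferoperators}, using $k(\alpha)=1$) makes the right-hand side a consistent family — a nonnegative $\P$-martingale of mean $1$ — so $\mathbb{Q}_x$ exists by Kolmogorov's extension theorem, and $\es[\alpha]>0$ makes it well defined. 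Under $\mathbb{Q}_x$ the direction process $Y_n:=\Pi_n\as x$ is a Markov chain on $\S$ whose kernel $Q$ inherits, from the spectral gap just mentioned, uniform ergodicity with a unique stationary law $\pi_{\alpha}(dx)\propto\es[\alpha](x)\,\nu_\alpha(dx)$; and the additive functional $S_n:=\log\abs{\Pi_n x}$ is then a Markov random walk driven by $(Y_n)$ whose stationary increment mean equals $k'(\alpha)>0$, so $S_n\to+\8$ $\mathbb{Q}_x$-almost surely.

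\smallskip
\noindent\emph{First passage and a Markov renewal theorem.} Iterating $R\eqdist AR+B$ gives $R\eqdist R_n+\Pi_n R^{(n)}$ with $R^{(n)}\eqdist R$ independent of $(\Pi_n,R_n)$. Using $\E\norm{\Pi_m}^{s}\le C_sk(s)^m$ from Proposition \ref{prop:transferoperators} (so $\E\abs{R}^{s}<\8$ for $s<\alpha$, while $\E\norm{\Pi_m}^{\alpha}\le C_{\alpha}$ uniformly in $m$) together with $\E\abs{B}^{\alpha+\epsilon}<\8$, one shows that up to an $o(t^{-\alpha})$ error the tail event $\{\abs{R}>t\}$ is governed by the event that the modulated walk $S_n$, started from the relevant direction, first crosses level $\log t$; the $B$-terms and the discrepancy between $\abs{AR'+B}$ and $\abs{AR'}$ are absorbed into the error by those moment bounds. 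Undoing the change of measure at the crossing time $\tau_t$ restores a factor $\abs{\Pi_{\tau_t}x}^{-\alpha}=t^{-\alpha}e^{-\alpha(S_{\tau_t}-\log t)}$, and I would then apply the key renewal theorem for Markov random walks with positive drift (Kesten; see also the version used in \cite{Guivarch2016}), which is available because $Q$ has a spectral gap uniform in $x$: it shows that $(S_{\tau_t}-\log t,\,Y_{\tau_t})$ converges in law, whence $t^{\alpha}\,\P(\abs{R}>t,\ R/\abs{R}\in\cdot)\to C\lambda(\cdot)$ for a finite measure $\lambda$ on $\S$ and some $C\ge0$. Feeding this limit back into $R\eqdist AR+B$ forces $\lambda$ to satisfy $P^{\alpha}\lambda=\lambda$, so by the uniqueness in Proposition \ref{prop:transferoperators} it is a multiple of $\nu_\alpha$; absorbing the multiple into $C$ yields the stated limit, and $C>0$ follows from $\P(Ax+B=x)<1$ for all $x$ together with \eqref{eq:ipnc}, which preclude the degenerate situations in which the renewal mass would vanish.

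\smallskip
The main obstacle will be the two inputs flagged above: proving that the tilted kernel $Q$ has a spectral gap, uniformly in the starting direction, on an appropriate Hölder space on $\S$ under \eqref{eq:ipnc}, and then running the Markov renewal theorem with enough uniformity to justify truncating the $B$-contribution and linearising $\abs{AR'+B}\approx\abs{AR'}$. This is exactly what occupies the bulk of \cite{Guivarch2016}; an alternative organisation replaces the explicit first-passage analysis by Goldie's implicit renewal theorem adapted to the projective setting, but it rests on the same spectral input.
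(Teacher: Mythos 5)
The paper does not prove Theorem \ref{th:GuiLeP}; its ``proof'' is a one-line citation of \cite[Theorem 5.2]{Guivarch2016}. Your sketch is a correct high-level account of the eigenfunction change-of-measure, Markov renewal, and spectral-gap strategy that underlies Guivarc'h and Le Page's argument, so it is consistent with the source the paper invokes and there is no in-paper proof to compare it against.
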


\begin{proof}[Source]
	This is \cite[Theorem 5.2]{Guivarch2016}.
\end{proof}

\begin{remark}
	This fundamental results gives {\em i.a.} that $R$ is in the domain of attraction of a multivariate $\alpha$-stable law, if $\alpha \in (0,2)$. In fact, even more can be said about sums of the iterations. Write $S_n:=\sum_{k=1}^n R_k$ and let $m:=\E R$ if $\alpha>1$ (then this expectation is finite.) It is shown in \cite[Theorem 1.1]{Gao2015}	that under the assumptions of Theorem \ref{th:GuiLeP}, the following holds.  If $\alpha>2$, then $\frac{1}{\sqrt{n}} (S_n-nm)$ converges in law to a multivariate Normal distribution.  If $\alpha=2$, then			$\frac{1}{\sqrt{n \log n}} (S_n-nm)$ converges in law to a multivariate Normal distribution.  If $\alpha \in (0,1) \cup (1,2)$, let $t_n:=n^{-1/\alpha}$ and $d_n= n t_n m \mathds{1}_{\{\alpha>1\}}$. 
	Then $(t_n S_n - d_n)$ converges in law to a multivariate $\alpha$-stable distribution.
	In all cases, the limit laws are fully nondegenerate.
	
\end{remark}

Theorem \ref{th:GuiLeP} gives in particular that $\E |R|^\alpha=\infty$.
Our first result shows that $\E |R_n|^\alpha$ is of order $n$ precisely.

\begin{theorem} \label{th:growmoments} Assume \eqref{eq:ipnc}, $\gamma <0$, $P(Ax+B=x)<1$ for all $x \in \R^d$  and that there exists $\alpha \in I_k$ with $k(\alpha)=1$ and $\E |B|^\alpha<\infty$. Then 
	\begin{equation}\label{eq:momentgrowth}
		\limsup \frac{1}{n}\E |R_{n}|^{\a}<\8, \quad  \liminf \frac{1}{n}\E |R_{n}|^{\a}>0.
	\end{equation}	
	
\end{theorem}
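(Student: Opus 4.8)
The plan is to establish the two bounds in \eqref{eq:momentgrowth} separately, using the representation $R_n = \sum_{k=1}^n \Pi_{k-1} B_k$ together with the submultiplicative estimate $\E \norm{\Pi_m}^\alpha \le C_\alpha k(\alpha)^m = C_\alpha$ from Proposition \ref{prop:transferoperators} (valid since $k(\alpha)=1$). Note that $\alpha$ may exceed $1$, so the triangle inequality alone is not enough; for the upper bound I would split into the cases $\alpha \le 1$ and $\alpha > 1$.

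For the upper bound: when $\alpha \le 1$, subadditivity of $t \mapsto t^\alpha$ gives $\E |R_n|^\alpha \le \sum_{k=1}^n \E \norm{\Pi_{k-1}}^\alpha \E |B_k|^\alpha \le n\, C_\alpha\, \E|B|^\alpha$ using independence of $\Pi_{k-1}$ and $B_k$, which is exactly $O(n)$. When $\alpha > 1$, I would use a martingale/Burkholder-type argument: setting $Y_k := \Pi_{k-1} B_k$, the partial sums form (after centering, if needed) a sum of independent $\R^d$-valued terms, and by the Marcinkiewicz--Zygmund or von Bahr--Esseen inequality (for $1 < \alpha \le 2$) resp. Rosenthal's inequality (for $\alpha > 2$) one controls $\E |R_n|^\alpha$ by a constant times $\big(\sum_{k=1}^n \E|Y_k|^{2}\big)^{\alpha/2}$ plus $\sum_{k=1}^n \E |Y_k|^\alpha$, or more simply by $n^{\alpha/2}$-type growth. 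Since the target is only $O(n)$ and not $O(n^{\alpha/2})$, a cleaner route for $\alpha>1$ is: bound $\E|R_n|^\alpha$ directly via the recursion $R_n = A_1 R_{n-1}^{(1)} + B_1$ and a contraction-type inequality. Concretely, for $\alpha>1$ one has $\E\|A\|^\alpha$ need not be $<1$, but iterating $\E|R_{n+1}|^\alpha \le \E\big(\|A_1\||R_n^{(1)}| + |B_1|\big)^\alpha$ and expanding with the inequality $(a+b)^\alpha \le a^\alpha + \alpha(a+b)^{\alpha-1} b$, together with $\E\norm{\Pi_m}^\alpha \le C_\alpha$, telescopes to give $\E|R_n|^\alpha \le C_\alpha \sum_{k=0}^{n-1} \E\big[\norm{\Pi_{k}}^{\alpha-1} \cdot (\text{lower-order}) \big]$; the key point is that the ``diagonal'' term $\E\norm{\Pi_{k}}^\alpha$ stays bounded, so the sum is $O(n)$. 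I would write this out using the explicit bound on $k(\alpha)$ rather than invoking heavy martingale inequalities.

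For the lower bound, the idea is that $R_n$ cannot be too small because $R_{n_0}$ for a fixed $n_0$ already has positive $\alpha$-moment bounded below, and the tail of $R_n$ is ``inherited'' as $n$ grows. Precisely, I would use the decomposition $R_{n} = R_{n_0}^{(n-n_0 \text{ shifted})} + \Pi \cdot (\text{remainder})$ — or more cleanly, write $R_n = R_m + \Pi_m R_{n-m}^{(m)}$ where $R_{n-m}^{(m)}$ is an independent copy of $R_{n-m}$ and is independent of $(\Pi_m, R_m)$. Taking $m = \lfloor n/2 \rfloor$ and using the reverse triangle inequality together with $\E |R_m|^\alpha \ge \E|R_1|^\alpha > 0$ (which is positive since $\P(Ax+B=x)<1$ for all $x$ forces $B$, hence $R_1 = B_1$, to be nondegenerate in a suitable sense — actually $R_1 = B_1$ and if $\E|B|^\alpha=0$ then $B=0$ a.s.\ contradicting the affine nondegeneracy, so $\E|B_1|^\alpha>0$), one hopes to propagate a uniform-in-$n$ lower bound. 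The cleanest version: since $R_n \eqdist B_1 + A_1 R_{n-1}^{(1)}$ with the two summands suitably independent, and since $\E|R_{n-1}|^\alpha$ is what we are bounding, a direct conditioning argument shows $\E|R_n|^\alpha \ge c\, n$ by summing $n$ ``almost independent'' contributions of size $\Theta(1)$ each; one extracts this by noting that the series for $R$ diverges in $L^\alpha$ at linear rate — indeed $\E|R_n|^\alpha \to \infty$ (as $\E|R|^\alpha = \infty$ by Theorem \ref{th:GuiLeP}) combined with the already-established upper bound $\E|R_n|^\alpha = O(n)$ and a subadditivity/superadditivity argument on the sequence $a_n := \E|R_n|^\alpha$ pins down the exact linear order.

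The main obstacle is the lower bound, specifically making the ``$n$ independent contributions of order $1$'' heuristic rigorous when $\alpha>1$, where cancellation among the terms $\Pi_{k-1}B_k$ is possible. I expect to handle this by a second-moment-type or anti-concentration argument on the conditional laws, exploiting condition \ipnc\ (which via Proposition \ref{prop:transferoperators} prevents $\supp(\nu_\alpha)$ and $\supp(\nu^*_\alpha)$ from concentrating on a hyperplane, giving genuine spreading of $\Pi_{k-1}B_k$ across directions), or alternatively by the softer route via $a_n = O(n)$, $a_n \to \infty$, and an approximate superadditivity $a_{n+m} \ge c(a_n + a_m)$ derived from the splitting of $R_{n+m}$, which forces $\liminf a_n/n > 0$.
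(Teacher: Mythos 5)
The central ingredient of the paper's proof is the auxiliary quantity $b_n := \E\big[r_\alpha(R_n/|R_n|)\,|R_n|^\alpha\big] = c\int_S\E|\langle y,R_n\rangle|^\alpha\,\nu^*_\alpha(dy)$, built from the eigenfunction $r_\alpha$ of $P^\alpha$ and the dual eigenmeasure $\nu^*_\alpha$ from Proposition~\ref{prop:transferoperators}. Since $P^\alpha r_\alpha = k(\alpha)r_\alpha = r_\alpha$, one has the \emph{exact} identity $b_n = c\int_S\E|\langle y, A_1R_n^{(1)}\rangle|^\alpha\,\nu^*_\alpha(dy)$, so that $b_{n+1}-b_n = \int_S\E\big[|\langle y, A_1R_n^{(1)}+B_1\rangle|^\alpha - |\langle y, A_1R_n^{(1)}\rangle|^\alpha\big]\nu^*_\alpha(dy)$, a difference that is bounded uniformly in $n$ and converges by dominated convergence; telescoping gives $b_n/n\to C$. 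Because $r_\alpha$ is continuous and strictly positive and $\supp\nu^*_\alpha$ is not contained in a hyperplane, $b_n\asymp\E|R_n|^\alpha$, and both inequalities in \eqref{eq:momentgrowth} drop out at once. Your proposal never invokes $r_\alpha$ or $\nu^*_\alpha$, and this is precisely why the two harder parts do not go through.

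Your upper bound for $\alpha\le 1$ via subadditivity of $t\mapsto t^\alpha$ together with $\E\|\Pi_m\|^\alpha\le C_\alpha$ is correct, but that is the easy half. For $\alpha>1$ the argument breaks down: the terms $Y_k=\Pi_{k-1}B_k$ are \emph{not} independent (they all share the $A_i$), and they form a martingale difference sequence only under the unassumed condition $\E B=0$; and even granting centering, the quadratic-variation contribution in Rosenthal/Burkholder is of order $n^{\alpha/2}$, not $O(n)$. Your ``contraction'' fallback fails at exactly the step the eigenfunction construction is designed to avoid: replacing $|A_1R_n^{(1)}|$ by $\|A_1\|\,|R_n^{(1)}|$ gives a one-step recursion with multiplier $\E\|A\|^\alpha$, and $k(\alpha)=1$ yields only $\E\|A\|^\alpha\ge 1$ by submultiplicativity (typically strict), so the recursion is exponentially divergent; the bound $\E\|\Pi_m\|^\alpha\le C_\alpha$ cannot be re-inserted after the norm has been pulled apart in that way. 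For the lower bound you correctly flag it as the main obstacle, but what you propose are heuristics, not proofs: the decomposition $R_{n+m}=R_n+\Pi_n\tilde R_m$ gives no superadditivity of $\E|\cdot|^\alpha$ because the two pieces can cancel, and $\E|R_n|^\alpha\to\infty$ is perfectly compatible with $\E|R_n|^\alpha=o(n)$, so the soft asymptotic route does not pin down linear growth. The missing tool throughout is the eigenvalue-one cancellation supplied by $r_\alpha$ and $\nu^*_\alpha$.
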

\begin{remark} Our result improves \cite[Prop. 10]{Guerbuezbalaban2021}, where it was shown that $\E |R_n|$ is at most of order $n^\alpha$. 
	Analogous results were obtained in \cite{BurDamZen2016} for the stochastic recursion \eqref{eq:SRE1} in the one dimensional case as well as for a multidimensional case  with $A_n$ being similarities. 
\end{remark}

While Theorem \ref{th:GuiLeP} is concerned with the tail behaviour of the stationary distribution $R$, our next theorem gives an upper bound on the tails of finite iterations.

\begin{theorem}
	\label{thm:tailsRn}
	Assume \eqref{eq:ipnc}, $\gamma <0$ and that there is $\alpha \in I_k$ with $k(\a)=1$. Assume further that there is  $\epsilon>0$ such that $k(\alpha+\epsilon)<\infty$ and $\E |B|^{\alpha+\epsilon}<\infty$.
	Then for each $n \in N$, there is a constant $C_n$, such that for all $t>0$
	$$ \P ( \abs{R_n} > t) ~\le~ C_n t^{-(\alpha + \epsilon)} $$
\end{theorem}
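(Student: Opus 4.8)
The plan is to bound the $(\alpha+\epsilon)$-th moment of $R_n$ and then apply Markov's inequality. Concretely, since $\P(|R_n| > t) \le t^{-(\alpha+\epsilon)} \E |R_n|^{\alpha+\epsilon}$, it suffices to show $\E |R_n|^{\alpha+\epsilon} < \infty$ for each fixed $n$, and then set $C_n := \E |R_n|^{\alpha+\epsilon}$. So the whole theorem reduces to a finite-moment statement for finite iterations, which is much softer than the stationary tail result of Theorem~\ref{th:GuiLeP}.

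First I would recall $R_n = \sum_{k=1}^n \Pi_{k-1} B_k$ with $\Pi_0 = \Id$ and $\Pi_{k-1} = A_1 \cdots A_{k-1}$. Write $s := \alpha + \epsilon$. By the triangle inequality and the elementary bound $\big(\sum_{k=1}^n c_k\big)^s \le n^{(s-1)^+} \sum_{k=1}^n c_k^s$ (convexity for $s \ge 1$, subadditivity for $s \le 1$), we get
\begin{equation*}
	\E |R_n|^s ~\le~ n^{(s-1)^+} \sum_{k=1}^n \E \big( \norm{\Pi_{k-1}}^s |B_k|^s \big).
\end{equation*}
Now use independence: $\Pi_{k-1} = A_1 \cdots A_{k-1}$ depends only on $A_1, \dots, A_{k-1}$, while $B_k$ is independent of these (the pairs $(A_j, B_j)$ are i.i.d., and $B_k$ is independent of $A_1,\dots,A_{k-1}$). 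Hence $\E \big( \norm{\Pi_{k-1}}^s |B_k|^s \big) = \E \norm{\Pi_{k-1}}^s \cdot \E |B_k|^s = \E \norm{\Pi_{k-1}}^s \cdot \E |B|^s$. The second factor is finite by assumption. For the first, submultiplicativity of the operator norm gives $\norm{\Pi_{k-1}}^s \le \prod_{j=1}^{k-1} \norm{A_j}^s$, so by independence $\E \norm{\Pi_{k-1}}^s \le \big( \E \norm{A}^s \big)^{k-1} = k(1)$-type quantity; more precisely $\E \norm{A}^s < \infty$ because $s = \alpha+\epsilon \in I_k$ (as $k(\alpha+\epsilon) < \infty$ by hypothesis, and $I_k$ is an interval containing $\alpha$, hence also $\alpha+\epsilon$ for the given $\epsilon$ — here one uses that $k(s) = \infty$ as soon as $\E \norm{A}^s = \infty$). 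Putting this together,
\begin{equation*}
	\E |R_n|^s ~\le~ n^{(s-1)^+}\, \E|B|^s \sum_{k=1}^n \big( \E \norm{A}^s \big)^{k-1} ~<~ \infty,
\end{equation*}
and we take $C_n$ equal to this finite quantity.

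The only genuinely delicate point — and the one I would state carefully — is the justification that $\E \norm{A}^{\alpha+\epsilon} < \infty$, i.e., that $\alpha+\epsilon \in I_k$. This does not follow from $k(\alpha+\epsilon) < \infty$ alone unless we know that $k(s)$ being finite forces $\E \norm{A}^s$ to be finite; but this is immediate since $\E\norm{A_n\cdots A_1}^s \ge (\E\norm{A_1}^s)$-comparisons are awkward, so cleaner is to note that the hypothesis $k(\alpha+\epsilon) < \infty$ is, by the very definition of $I_k = \{s \ge 0 : \E\norm{A}^s < \infty\}$ and $k(s)$ being defined only for $s \in I_k$, tantamount to $\alpha+\epsilon \in I_k$. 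Everything else is routine: $L^p$-triangle/convexity inequalities, independence, submultiplicativity, and a finite geometric sum. No heavy-tail theory, no transfer operators, and no appeal to Theorem~\ref{th:GuiLeP} are needed — the strength of the $\alpha+\epsilon$ exponent (strictly above the critical $\alpha$) is exactly what makes the finite-$n$ moment finite, even though it is infinite in the $n \to \infty$ limit.
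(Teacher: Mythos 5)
Your proof is correct, and it reaches the same conclusion as the paper's by a structurally similar but slightly more elementary route. Both proofs ultimately rest on the same observation: for a fixed finite $n$, the $(\alpha+\epsilon)$-th moment of $R_n$ is finite (even though it blows up as $n\to\infty$), and Markov's inequality then gives the polynomial tail bound. The differences are in execution. You bound $\E|R_n|^{\alpha+\epsilon}$ directly via the power-mean inequality $(\sum c_k)^s \le n^{(s-1)^+}\sum c_k^s$ and then apply Markov once; the paper instead splits the event $\{|R_n|>t\}$ via a union bound with the weights $6/(\pi^2 m^2)$ (which sum to one) and applies Markov to each piece, arriving at an equivalent estimate. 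You bound $\E\|\Pi_{m}\|^s$ by $(\E\|A\|^s)^{m}$ through submultiplicativity and i.i.d.\ factorisation, a completely elementary step; the paper instead invokes the sharper inequality $\E\|\Pi_m\|^s \le C_s k(s)^m$ from Proposition~\ref{prop:transferoperators}, which is more refined but brings in machinery that your argument shows is unnecessary here. Your discussion of why $\alpha+\epsilon\in I_k$, and hence $\E\|A\|^{\alpha+\epsilon}<\infty$, is exactly right: by the paper's definition, $k$ is only defined on $I_k=\{s\ge0:\E\|A\|^s<\infty\}$, so the hypothesis $k(\alpha+\epsilon)<\infty$ presupposes $\alpha+\epsilon\in I_k$. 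Net effect: your argument is self-contained and avoids the transfer-operator bound, which is a modest but genuine simplification; the paper's version produces an explicit (if unwieldy) form for $C_n$, which yours does too, just packaged differently.
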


Note that the constant $C_n$ changes with $n$, but is independent of $t$. This shows that for any fixed $n$, the tails of $R_n$ are of (substantially) smaller order than the tails of the limit $R$. 

\medskip

Theorem \ref{th:GuiLeP} states that the directional behaviour of large values of $R$ is governed by $\nu_\alpha$, which is given by Proposition \ref{prop:transferoperators} in an abstract way: as the invariant measure of the operator $P^\alpha$. The next theorem allows to obtain a simple expression both for $\nu_\alpha$ as well as for the top Lyapunov exponent, under assumption \eqref{rotinv}.

\begin{theorem}\label{lem:uniform measure}
	Assume \eqref{rotinv}. Then for all $s \in I_k$,  $$k(s)= \E | (I-\xi H)e_1| ^s$$ and
	$\nus$ is the uniform measure on $S$. 
	\\	
	If in addition $\E (1+\| A\| ^s)\log N(A)<\8$ holds for some $s>0$, then 
	%
	the Lapunov exponent $\gamma$ is
	\begin{equation*}
		\gamma=k'(0)=\E \log | (I-\xi H)e_1| .
	\end{equation*}
\end{theorem}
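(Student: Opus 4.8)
The plan is to exploit the rotation-invariance of $\mu_A$ to show that the uniform measure $\sigma$ on $S$ is stationary for both $P^s$ and $P^s_*$, and then to invoke the uniqueness statement from Proposition \ref{prop:transferoperators}. The key observation is that if $H \eqdist \od H \od^\t$ for all $\od \in O(d)$, then $A = \Id - \xi H$ also satisfies $A \eqdist \od A \od^\t$. From this one computes, for $f \in \Cf(S)$ and any fixed $\od \in O(d)$, that $P^s f(\od \cdot x) = \E \abs{A \od x}^s f(A \cdot (\od x))$; substituting $A = \od^\t A' \od$ in distribution and using $\abs{\od^\t A' \od x}^s = \abs{A' \od x}^s$ (orthogonal invariance of the euclidean norm, applied to the outer $\od^\t$) together with $\od^\t A' \od \cdot x = \od^\t \cdot (A' \cdot (\od x))$ shows that $P^s$ intertwines the $O(d)$-action: $P^s(f \circ \od) = (P^s f) \circ \od$. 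Since $\sigma$ is the unique $O(d)$-invariant probability measure on $S$, this intertwining forces $P^s \sigma$ to be $O(d)$-invariant as well, hence proportional to $\sigma$; the proportionality constant is $\rho(P^s) = k(s)$ by Proposition \ref{prop:transferoperators}. By uniqueness of the stationary measure $\nu_s$ we conclude $\nu_s = \sigma$. The same argument applied to $P^s_*$ (using that $A^\t \eqdist \od A^\t \od^\t$ as well) gives $\nu^*_s = \sigma$.

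Next I would identify $k(s)$. Using Proposition \ref{prop:transferoperators} once more, $e_s$ satisfies $e_s(x) = c \int_S \abs{\skalar{x,y}}^s \nu^*_s(dy) = c \int_S \abs{\skalar{x,y}}^s \sigma(dy)$, which is a constant function of $x \in S$ by rotation-invariance of $\sigma$; normalising $\int e_s \, d\nu_s = 1$ forces $e_s \equiv 1$. Then from $P^s e_s = k(s) e_s$ evaluated at $x = e_1$ we get $k(s) = P^s \mathbf{1}(e_1) = \E \abs{A e_1}^s = \E \abs{(\Id - \xi H) e_1}^s$, which is the claimed formula. (Alternatively one can integrate the eigenfunction relation $\int P^s e_s \, d\sigma = k(s)$ directly, which by Fubini equals $\E \int \abs{A x}^s \sigma(dx)$, and then use the eigenmeasure relation in the reverse direction; either route works.)

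For the final assertion about $\gamma$, the additional moment hypothesis $\E(1 + \norm{A}^s)\log N(A) < \infty$ is exactly what Proposition \ref{prop:diffk} needs to conclude that $k$ is continuously differentiable on $[0,s]$ with $k'(0) = \gamma$. So it remains to differentiate $k(s) = \E \abs{(\Id - \xi H)e_1}^s$ at $s = 0$. Writing $\abs{(\Id-\xi H)e_1}^s = \exp(s \log\abs{(\Id - \xi H)e_1})$ and differentiating under the expectation at $s=0$ yields $k'(0) = \E \log\abs{(\Id - \xi H)e_1}$; the interchange of derivative and expectation is justified by dominated convergence, the relevant domination being controlled by $\log N(A) \ge \abs{\log\abs{Ae_1}}$ together with the moment assumption (and the finiteness of $k$ near $0$ for the positive part). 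Combining with $k'(0) = \gamma$ gives the formula.

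The main obstacle is the intertwining computation in the first paragraph: one must carefully track which orthogonal matrix acts where, since $P^s$ involves both a norm factor $\abs{Ax}^s$ and the projective action $A \cdot x$, and it is easy to lose a transpose or apply invariance on the wrong side. Once the identity $P^s(f \circ \od) = (P^s f)\circ \od$ is established cleanly, the rest follows from the uniqueness statements already available in Proposition \ref{prop:transferoperators} and from Proposition \ref{prop:diffk}, with only routine dominated-convergence bookkeeping left for $\gamma$.
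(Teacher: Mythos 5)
Your proposal rests on the same underlying observation as the paper (rotation-invariance of the law of $A$ makes the Haar measure $\sigma$ an eigenmeasure), but it is packaged differently, so let me compare the two and flag one spot that needs tightening. The paper proceeds by a direct computation: writing $y = \od e_1$ and using $\od^{-1}H\od \eqdist H$ it shows $P^s f(y) = \E f(\od(\Id-\xi H)\cdot e_1)\,|(\Id-\xi H)e_1|^s$, from which it reads off simultaneously that $\rho(P^s) \le \E|(\Id-\xi H)e_1|^s$ (by taking $\|f\|=1$) and that $\int P^s f \, d\sigma = (\int f\,d\sigma)\,\E|(\Id-\xi H)e_1|^s$ (by Fubini and Haar invariance). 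Since the explicit eigenvalue attached to $\sigma$ matches the upper bound on the spectral radius, it must equal $k(s)$, and uniqueness gives $\nu_s = \sigma$. You instead prove an abstract $O(d)$-intertwining $P^s(f\circ\od) = (P^sf)\circ\od$, deduce $P^s\sigma$ is $O(d)$-invariant hence proportional to $\sigma$, and then obtain the formula for $k(s)$ in a separate step via the representation $e_s(x)=c\int|\skalar{x,y}|^s\nu^*_s(dy)$, which becomes constant once $\nu^*_s = \sigma$. Both routes are valid; yours is more conceptual, the paper's more computational and self-contained (it never needs the $e_s$-representation).

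The one genuine soft spot is your assertion that the proportionality constant $c$ in $P^s\sigma = c\sigma$ equals $\rho(P^s)=k(s)$ ``by Proposition~\ref{prop:transferoperators}.'' That proposition only asserts \emph{uniqueness} of the probability-measure eigenvector at eigenvalue $k(s)$; it does not say that an arbitrary positive eigenmeasure must sit at the top eigenvalue. You can close the gap in one line by pairing with the strictly positive eigenfunction: $c\int e_s\,d\sigma = \int e_s\,d(P^s\sigma) = \int P^s e_s\,d\sigma = k(s)\int e_s\,d\sigma$, and $\int e_s\,d\sigma>0$ since $e_s>0$, so $c=k(s)$; alternatively, compute $c$ directly as $\int P^s\mathbf{1}\,d\sigma = \E|(\Id-\xi H)e_1|^s$ and combine with the spectral-radius bound, which is what the paper does. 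Your treatment of $\nu^*_s$ and of $k'(0)=\gamma$ via Proposition~\ref{prop:diffk} is fine (note also that $A$ is symmetric here, so $P^s_* = P^s$ and $\nu^*_s=\nu_s$ trivially), and your domination $\bigl|\log|Ae_1|\bigr|\le\log N(A)$ is exactly the right bound for the dominated-convergence step.
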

\begin{remark}
	Our result strengthens \cite[Theorem 3]{Guerbuezbalaban2021}, where the formulae for $k(s)$ and $\gamma$ were proved for the particular model \eqref{Rank1Gauss}. 
\end{remark}

\subsection{How does the tail index depend on $\xi$?}

For $A=I-\xi H$ and the corresponding $R$ we intend to determine how the tail index of $R$ depends on $\xi $, assuming model \eqref{rotinv}. 
For this model, Theorem \ref{lem:uniform measure} provides us with the formulae
\begin{equation}\label{eq:kxis}
	k(s)=\E | (I-\xi H)e_1| ^s=:h(\xi,s), \qquad \gamma =\E \log  | (I-\xi H)e_1|.
\end{equation}
We introduce the function $h$ here to highlight the dependence on both $\xi,s$ and to remind the reader that it is equal to the spectral radius $k(s)$ only for this particular model.
By Theorem \ref{th:GuiLeP}, the tail index $\a (\xi ) $ of $R$ corresponding to $\xi $ satisfies 
\begin{equation}\label{eq:tailind}
	h(\xi, \a (\xi))=1	
\end{equation}

For fixed $\xi>0$, $h$ is convex as a function of $s$ (see Prop. \ref{prop:transferoperators}), with $h(0)=1$. Thus $\gamma=\frac{\partial{d}h}{\partial{ds}}'(\xi,0)<0$ is a necessary condition for the existence of $\alpha>0$ with $h(\xi,\alpha)=1$. 

A simple calculation shows that $\gamma \to \8$ when $\xi \to \8$ and so \eqref{eq:tailind} may happen only for $\xi $ from a bounded set. However, for an arbitrary law of $H$ it is difficult to determine the set $U:=\{ \xi: \E \log  | (I-\xi H)e_1|<0 \}$ directly. Therefore, we will provide sufficient conditions for $U \neq \emptyset$.

\begin{theorem}\label{th:shapes} \label{th:behavioralpha}  Consider $A=\Id - \xi H$ with variable $\xi>0$.
	Assume \eqref{rotinv} and that 
	$\supp(H)$ is unbounded. Assume $s_0=\infty$ and that	
	\begin{equation}\label{eq:Hpos2}
		\E  \langle He_1,e_1\rangle >0 \quad \text{ and } \quad \E |\langle He_2,e_1\rangle |^{-\delta} <\infty \text{ for every } \delta \in (0,1).
	\end{equation}
	Then the following holds.
	\begin{enumerate}
		\item\label{k2} 	There is a unique $\xi _1>0$ such that $h(\xi _1,1)=1$  and for every $\xi \in (0,\xi_1)$ there is a unique $\a =\a (\xi )>1$ such that  $h(\xi, \a (\xi))=1.$	In particular,
		$\E \log  | (I-\xi H)e_1|<0$ for all $\xi \in (0,\xi _1]$ and the set $U$ is nonempty.
		\item\label{k3} The function $\xi \mapsto \a (\xi)$ is strictly decreasing and in $\Cf^{1}\big( (0,\xi_1) \big)$,  with
		\begin{equation}\label{eq:limits}
			\lim _{\xi \to 0^+}\a (\xi )=\8,\quad \quad \lim _{\xi \to \xi ^-_1}\a (\xi )=1.
		\end{equation}
		\item\label{k4} For $\xi \in U$ with $\xi >\xi _1$ it holds $\alpha(\xi )<1$.
	\end{enumerate}
\end{theorem}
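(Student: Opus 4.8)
The plan is to exploit the formula $h(\xi,s)=\E|(I-\xi H)e_1|^s$ from Theorem \ref{lem:uniform measure} together with convexity of $s\mapsto h(\xi,s)$, and to analyse carefully the behaviour of $h(\xi,s)$ as $\xi\to 0^+$, as $s\to\infty$, and on the line $s=1$. First I would record the elementary identity
$$ |(I-\xi H)e_1|^2 ~=~ 1 - 2\xi\langle He_1,e_1\rangle + \xi^2 |He_1|^2, $$
which shows that $\xi\mapsto h(\xi,s)$ is smooth in $\xi$ on $(0,\infty)$ for each fixed $s$ (differentiation under the integral sign is justified by $s_0=\infty$ and the moment hypotheses, since $\norm{H}$ has all moments). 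Differentiating in $\xi$ at $\xi=0$ gives $\frac{\partial}{\partial\xi}h(\xi,s)\big|_{\xi=0}=-s\,\E\langle He_1,e_1\rangle<0$ by \eqref{eq:Hpos2}, so $h(\xi,1)<1$ for small $\xi>0$; and since $\supp(H)$ is unbounded one checks $h(\xi,1)=\E|(I-\xi H)e_1|\to\infty$ as $\xi\to\infty$ (the mass of $|He_1|$ at large values forces $|(I-\xi H)e_1|\gtrsim \xi|He_1|-1$ to be large with non-vanishing probability). Strict convexity of $\xi\mapsto h(\xi,1)$ — which follows because $t\mapsto|a-\xi c|$ is convex in $\xi$ and is strictly convex unless $c=0$, and $\P(He_1=0)<1$ — then yields a \emph{unique} $\xi_1>0$ with $h(\xi_1,1)=1$, proving the first half of part \eqref{k2}. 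For $\xi\in(0,\xi_1)$ we have $h(\xi,1)<1$; combined with $h(\xi,0)=1$, convexity in $s$, and $h(\xi,s)\to\infty$ as $s\to\infty$ (again using unboundedness of $\supp H$, so that $|(I-\xi H)e_1|>1$ with positive probability), there is a unique $\alpha(\xi)>1$ with $h(\xi,\alpha(\xi))=1$. Convexity in $s$ also forces $\partial_s h(\xi,s)|_{s=0}=\gamma(\xi)<0$ on this range (a chord from $(0,1)$ down to $(1,h(\xi,1))$ with $h(\xi,1)<1$), so $U\supseteq(0,\xi_1]$, where at $\xi_1$ one uses that $h(\xi_1,1)=1=h(\xi_1,0)$ together with strict convexity in $s$ to get $\gamma(\xi_1)<0$.

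Next, for part \eqref{k3} I would apply the implicit function theorem to $F(\xi,s):=h(\xi,s)-1$ on the region $\{\xi\in(0,\xi_1),\,s>1\}$. We need $\partial_s h(\xi,\alpha(\xi))\neq 0$: by strict convexity in $s$ and $h(\xi,0)=h(\xi,\alpha(\xi))=1$, the derivative $\partial_s h$ is \emph{strictly positive} at $s=\alpha(\xi)$. Hence $\alpha\in\Cf^1((0,\xi_1))$ and
$$ \alpha'(\xi) ~=~ -\,\frac{\partial_\xi h(\xi,\alpha(\xi))}{\partial_s h(\xi,\alpha(\xi))}. $$
The denominator is positive; for the numerator one shows $\partial_\xi h(\xi,\alpha(\xi))>0$. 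This is the point where the second condition in \eqref{eq:Hpos2} enters: writing $h(\xi,s)=\E|(I-\xi H)e_1|^s$ and differentiating, $\partial_\xi h(\xi,s)=s\,\E\big[|(I-\xi H)e_1|^{s-2}\langle (I-\xi H)e_1, -He_1\rangle\big]$; near $\xi_1$ and for $s$ near $1$ the integrand can be controlled, and monotonicity of $\xi\mapsto h(\xi,s)$ past its minimum (the unique minimiser of the convex function $\xi\mapsto h(\xi,s)$ lies at or below $\xi_1$ for the relevant $s$, using that $\partial_\xi h(\xi,1)>0$ for $\xi$ near $\xi_1$ from strict convexity and $h(\xi_1,1)=1>h(\xi_{\min},1)$) gives $\partial_\xi h(\xi,\alpha(\xi))>0$. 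Thus $\alpha'<0$, i.e. $\xi\mapsto\alpha(\xi)$ is strictly decreasing. The limit $\lim_{\xi\to\xi_1^-}\alpha(\xi)=1$ follows from continuity of $h$ and $h(\xi_1,1)=1$ plus uniqueness of the root; and $\lim_{\xi\to0^+}\alpha(\xi)=\infty$ follows because $h(\xi,s)\to 1$ as $\xi\to0$ uniformly for $s$ in compacts (dominated convergence, $|(I-\xi H)e_1|^s\to 1$), so any $\alpha$-root must escape to $\infty$; more quantitatively, $\log h(\xi,s)\le \frac{s}{2}\log\E|(I-\xi H)e_1|^2 = \frac{s}{2}\log(1-2\xi\E\langle He_1,e_1\rangle+\xi^2\E|He_1|^2)$ for $s\le 2$, but one genuinely needs all moments here since $\alpha(\xi)\to\infty$; the cleaner route is: if $\alpha(\xi_k)$ stayed bounded along $\xi_k\to0$, pass to a subsequential limit $\alpha_\infty$ and get $h(0,\alpha_\infty)=1$, contradicting $h(0,\cdot)\equiv 1$ being non-attained... — rather, contradicting that $h(\xi,\alpha_\infty)<1$ for small $\xi$ when $\alpha_\infty\ge 1$ is bounded, since $\partial_\xi h(\cdot,\alpha_\infty)|_0 = -\alpha_\infty\E\langle He_1,e_1\rangle<0$.

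For part \eqref{k4}: if $\xi\in U$ with $\xi>\xi_1$, then $\gamma(\xi)=\partial_s h(\xi,0)<0$, and $h(\xi,1)>1$ because $\xi\mapsto h(\xi,1)$ is strictly increasing for $\xi>\xi_1$ (strict convexity, with $\xi_1$ to the right of the minimiser since $\partial_\xi h(\xi_1,1)>0$, as argued above) and equals $1$ at $\xi_1$. So on $[0,1]$ the convex function $s\mapsto h(\xi,s)$ starts at $1$, has negative slope at $0$, and ends above $1$ at $s=1$; by the intermediate value theorem and convexity its unique positive root $\alpha(\xi)$ lies strictly in $(0,1)$, which gives $\alpha(\xi)<1$ as claimed.

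The main obstacle I expect is the sign of the numerator $\partial_\xi h(\xi,\alpha(\xi))$ in the implicit-function computation — equivalently, ruling out that the relevant $\xi$ lies to the \emph{left} of the minimiser of $\xi\mapsto h(\xi,s)$. Handling this needs the quantitative input of \eqref{eq:Hpos2}, in particular the negative-moment bound $\E|\langle He_2,e_1\rangle|^{-\delta}<\infty$, which is what guarantees $h(\xi,s)$ stays strictly above $1$ (hence past its minimum and strictly increasing) once $\xi\ge\xi_1$ and $s\le 1$: the off-diagonal entry prevents $|(I-\xi H)e_1|$ from being uniformly small, so the family cannot collapse below the $\xi_1$-threshold. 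Making this comparison precise — pinning the minimiser of $\xi\mapsto h(\xi,s)$ relative to $\xi_1$ for all relevant $s$ simultaneously — is the technical heart of the argument; everything else is convexity bookkeeping plus the implicit function theorem.
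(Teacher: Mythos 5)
Your overall route coincides with the paper's: use the explicit formula $h(\xi,s)=\E|(I-\xi H)e_1|^s$, exploit strict convexity of $h$ in each variable separately, then apply the implicit function theorem. However, two points need correction.

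First, a genuine gap: you repeatedly use $\frac{\partial h}{\partial\xi}(0,s)=-s\,\E\langle He_1,e_1\rangle<0$ at $s=1$ (to get $h(\xi,1)<1$ for small $\xi$) and at a hypothetical finite limit $\alpha_\infty\in(1,\infty)$ (for the $\xi\to0^+$ argument). The paper's Lemma~\ref{lem:diff} only establishes this formula at $\xi=0$ for $s\ge 2$; for $1<s<2$ it gives existence and continuity of $\partial_\xi h$ strictly on $\xi>0$, because its domination uses the crude bound $g(\xi+\theta v)^{s/2-1}\le(\xi/2)^{s-2}|\langle He_j,e_1\rangle|^{s-2}$, which blows up as $\xi\to0$. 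This is precisely why the paper's Step~1 is run at $s\ge2$, shows $\tilde U\neq\emptyset$ there, and then identifies $\tilde U=(0,\xi_1)$ via a maximal-interval argument, and why Step~3 picks $s>\max(\alpha_1,2)$ when deriving a contradiction from a bounded limit. Your plan either needs the same detour, or an independent argument for the $\xi$-derivative of $h(\cdot,1)$ at $0$ — e.g. observing that $\partial_\xi|(I-\xi H)e_1|=\langle(I-\xi H)e_1,-He_1\rangle/|(I-\xi H)e_1|$ is dominated by $|He_1|$ by Cauchy--Schwarz, so dominated convergence applies for $s=1$ without negative moments; but you would then have to supply that argument rather than invoke a derivative the paper does not prove.

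Second, you misidentify the "technical heart". The sign $\partial_\xi h(\xi_*,\alpha_*)>0$ needs no tracking of the minimiser of $\xi\mapsto h(\xi,s)$ across a range of $s$: at the fixed value $s=\alpha_*$, strict convexity of $\xi\mapsto h(\xi,\alpha_*)$ together with $h(0,\alpha_*)=h(\xi_*,\alpha_*)=1$ and $\xi_*>0$ immediately forces the slope at $\xi_*$ to be strictly positive. The hypothesis $\E|\langle He_2,e_1\rangle|^{-\delta}<\infty$ is not what "keeps $h$ above $1$ past $\xi_1$"; its role (via Lemma~\ref{lem:diff}) is to guarantee that $\partial_\xi h$ exists and is continuous on $(0,\infty)\times(1,s_0)$, which is exactly what the implicit function theorem demands at points $(\xi_*,\alpha_*)$ with $\alpha_*\in(1,2)$. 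Once you correct these two points, the remainder of your convexity bookkeeping (uniqueness of $\xi_1$, strict monotonicity of $\alpha$, the endpoint limits, and $\alpha(\xi)<1$ for $\xi\in U$, $\xi>\xi_1$) matches the paper's proof.
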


	\begin{remark}
		Our result generalizes \cite[Theorem 3]{Guerbuezbalaban2021} (which covers \eqref{Rank1Gauss} only) and clarifies  the proof of the differentiability both of $k$ and $\alpha$. The second part of condition \eqref{eq:Hpos2} together with $s_0=\infty$ implies $\E (1+ \norm{A}^s) \log N(A)<\infty$ for all $s>0$, see Lemma \ref{lem:partialdiffks} for details.
	\end{remark}
	
	\begin{remark} The assumption $\E \langle He_1,e_1\rangle >0$ will imply that $U \neq \emptyset$, while the second assumption in \eqref{eq:Hpos2} is used to assure that $h\in C^1\left ( (0,\8 )\times (1,\8 )\right)$, which we need to employ the implicit function theorem to study properties of $\xi \mapsto \alpha(\xi)$. Note that  $\E \langle He_1,e_1\rangle >0$ is automatically satisfied when $H$ is positive definite e.g.
		in the \eqref{Rank1} case. 
		
		For $\xi \in U$ with $\xi > \xi_1$, there still exists a unique $\alpha(\xi)$ which then satisfies $\alpha(\xi)<1$. However, we are not able to prove the $\Cf^1$ regularity of $h(\xi,s)$ when $s<1$ since $\partial h/\partial s$ is singular at $\xi=0$, see the proof of Lemma \ref{lem:diff} for details.

	\end{remark}
	
	\begin{figure}
		\centering
		\includegraphics[width=0.9\linewidth]{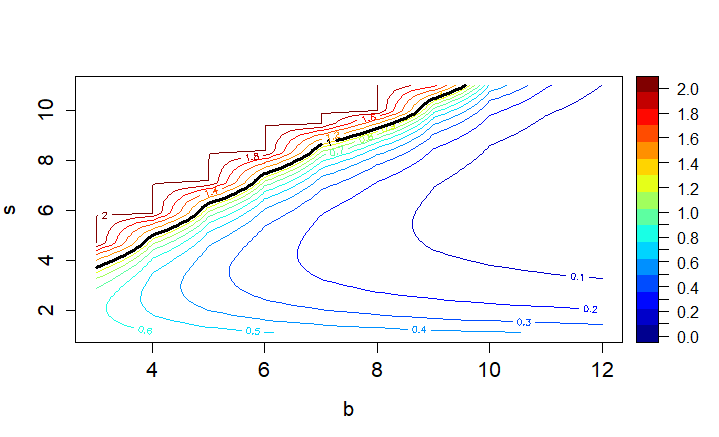}
		\caption{Contour plot of $h$ as a function of $b$ and $s$, for model \eqref{Rank1Gauss} with $d=2$ and $\eta=0.75$. The black line is the contour of $k \equiv1$. The values of $h$ have been cutted at level 2 for a better visualization.}
	\end{figure}
	
	\begin{figure}
		\centering
		\includegraphics[width=0.9\linewidth]{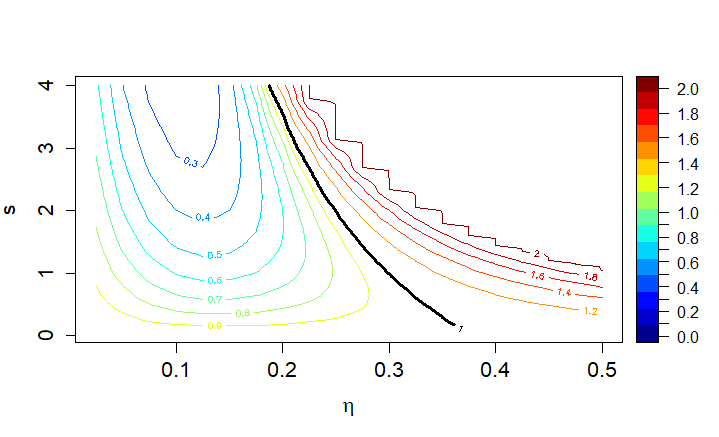}
		\caption{Contour plot of $h$ as a function of $\eta$ and $s$, for model \eqref{Rank1Gauss} with $d=2$ and $b=5$. The black line is the contour of $k\equiv1$. The values of $h$ have been cutted at level 2 for a better visualization. }
	\end{figure}

	\subsection{Checking the assumptions - the model \eqref{Rank1Gauss}}

	The following results confirm the hierarchy of our assumptions. 

	\begin{lemma}\label{lem:hierarchyAssump}
		We have the following implications:
		$$ \text{\eqref{Rank1Gauss}} \ \Rightarrow \ \text{\eqref{rotinv}} \ \Rightarrow \ \text{\eqref{eq:ipnc}} $$
	\end{lemma}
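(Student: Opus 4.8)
\emph{Overview.} The plan is to prove the two implications in turn. In each case the target assumption is a conjunction of three clauses --- a.s.\ invertibility of $A$ (which is what ``$G_A\subset GL(d,\R)$'' means here), a symmetry or structure property, and the presence of a proximal matrix in $G_A$ --- and these are verified one by one.

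\emph{First implication, \eqref{Rank1Gauss}$\Rightarrow$\eqref{rotinv}.} Here $A=\Id-\frac{\eta}{b}\sum_{i=1}^{b}a_ia_i^\t$ with i.i.d.\ $a_i\sim\mathcal N(0,\Id)$. For invertibility, $A$ is singular iff $b/\eta$ is an eigenvalue of $\sum_i a_ia_i^\t$, i.e.\ iff $(a_1,\dots,a_b)$ lies in the zero set of the polynomial $(a_1,\dots,a_b)\mapsto\det\bigl(\sum_i a_ia_i^\t-\frac{b}{\eta}\Id\bigr)$, which is not identically $0$ (evaluate at all $a_i=0$); hence this zero set has Lebesgue measure $0$, and by absolute continuity of the law of $(a_i)$ we get $\P(\det A=0)=0$. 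For rotation invariance, since $\od a_i\eqdist a_i$ for $\od\in O(d)$, the observation following \eqref{rotinv} shows each $a_ia_i^\t$ has rotation-invariant law, and by independence of the summands $\od A\od^\t=\Id-\frac{\eta}{b}\sum_i(\od a_i)(\od a_i)^\t\eqdist A$. For proximality, the continuous map $(a_1,\dots,a_b)\mapsto A$ sends $(se_1,0,\dots,0)$ to $\Id-\frac{\eta s^2}{b}e_1e_1^\t$, which for $\frac{\eta s^2}{b}>2$ is a rank-one perturbation of $\Id$ with simple dominant eigenvalue $1-\frac{\eta s^2}{b}<-1$ on the line $\R e_1$, hence proximal; as the law of $(a_i)$ has full support, this matrix lies in $\supp(\mu_A)\subset G_A$.

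\emph{Second implication, \eqref{rotinv}$\Rightarrow$\eqref{eq:ipnc}.} The clauses ``$G_A\subset GL(d,\R)$'' and ``$G_A$ contains a proximal matrix'' belong to \eqref{rotinv}, so only the irreducibility of $\mu_A$ and the absence of a $G_A$-invariant cone remain. The lever is that rotation invariance of $\mu_A$ makes $\supp(\mu_A)$, hence the closed semigroup $G_A$, stable under the conjugations $g\mapsto\od g\od^\t$ ($\od\in O(d)$); so if $p\in G_A$ is proximal with dominant direction $v\in S$ and $p$-invariant splitting $\R^d=\R v\oplus H_p$ (with $n_p$ a unit normal of $H_p$, so that $c:=\skalar{v,n_p}\neq0$ since $v\notin H_p$), then every $\od p\od^\t\in G_A$ is proximal with dominant direction $\od v$ --- these exhausting $S$ --- and splitting $\R(\od v)\oplus(\od n_p)^\perp$. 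To exclude an invariant finite union $\mathcal W=\bigcup_{i=1}^n W_i$ of non-zero proper subspaces, fix $W_1$ and an arbitrary $u\in S$: the sphere $\{z\in S:\skalar{z,u}=c\}$ affinely spans an affine hyperplane avoiding the origin, hence is not contained in the linear subspace $W_1^\perp$, so we may pick $z$ in it with $z\notin W_1^\perp$ and then $\od\in O(d)$ with $\od v=u$, $\od n_p=z$ (possible because the Gram matrices of $(v,n_p)$ and $(u,z)$ agree). Then $\od p\od^\t$ is proximal with dominant direction $u$ and repelling hyperplane $z^\perp\not\supseteq W_1$; taking $w\in W_1\setminus z^\perp$, the direction of $(\od p\od^\t)^kw$ tends to $\pm u$, and as $\mathcal W$ is closed and a union of subspaces, $u\in\mathcal W$. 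Since $u$ was arbitrary (when $|c|=1$ take $u$ in the dense set $S\setminus W_1^\perp$ and pass to the closure), the $W_i$ would cover $S$, which is impossible; for $d=1$ there are no non-zero proper subspaces and the condition is vacuous.

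\emph{No invariant cone; the main obstacle.} With irreducibility in hand, the dichotomy of \cite[Prop.~2.14]{Guivarch2016} says that either \eqref{eq:ipnc} holds, or $G_A$ preserves a proper closed convex cone $\mathcal C$. In the latter case $\mathcal C$ is pointed ($\mathcal C\cap(-\mathcal C)$ being a $G_A$-invariant subspace, hence $\{0\}$ by irreducibility), so in particular not a half-space, whence its projectivisation $P(\mathcal C)$ is a \emph{proper} subset of $P(\R^d)$; and, arguing as above with a point in the non-empty interior of $\mathcal C$, the dominant direction of every proximal element of $G_A$ lies in $P(\mathcal C)$. But the closure of the set of those dominant directions is $O(d)$-invariant, closed and non-empty, hence all of $P(\R^d)$ since $O(d)$ acts transitively --- a contradiction, so \eqref{eq:ipnc} holds. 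The first implication and the irreducibility step are essentially bookkeeping; the genuinely delicate point is this last one, the exclusion of an invariant cone, because the attracting direction of a proximal matrix is defined only up to sign, so a crude dynamical argument excludes only that $\mathcal C$ be a subspace, not that it be a half-space. This is why I would route the cone step through \cite[Prop.~2.14]{Guivarch2016} combined with the ``$O(d)$-invariant limit set is everything'' principle; alternatively one may exploit that \eqref{Rank1Gauss}, unlike \eqref{rotinv} in general, provides a proximal element with \emph{negative} dominant eigenvalue --- the matrix exhibited above --- which forces $\{v,-v\}\subset\mathcal C$ outright. One should also bear in mind that $d\ge2$ is the substantive range, and that ``$G_A\subset GL(d,\R)$'' is read throughout as a.s.\ invertibility of $A$, closures of semigroups being formed inside $GL(d,\R)$.
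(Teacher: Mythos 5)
Your proof is correct and closes the statement, but it differs from the paper's argument in two of the three nontrivial steps. For the first implication you and the paper construct the same proximal element $\Id-\tfrac{\eta}{b}vv^\top$ with $\tfrac{\eta}{b}\abs v^2>2$, and you verify rotation invariance identically; you additionally spell out the a.s.\ invertibility of $A$ via the zero set of $\det(\sum_i a_ia_i^\top-\tfrac b\eta \Id)$, a point the paper leaves implicit. For the second implication the paper cites \cite[Proposition~IV.2.5]{Bougerol1985} to obtain (i-p) from rotational invariance plus proximality, whereas you give a self-contained argument: conjugating a fixed proximal $p$ by a well-chosen $\od\in O(d)$ (picked via matching Gram matrices so that the attracting line is $u$ and the repelling hyperplane misses $W_1$), you push a point of $W_1$ to $\pm u$ under iteration, forcing the putative invariant union $\mathcal W$ to cover $S$, which is impossible. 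For the no-invariant-cone clause, both proofs route through \cite[Proposition~2.14]{Guivarch2016}, but extract different consequences: the paper uses the minimal $G_A$-invariant subset $M\subset S$ and shows $\od M\cap M$ is a strictly smaller invariant set for some $\od$, contradicting minimality; you instead note that every dominant direction of a proximal element of $G_A$ must lie in the (closed, proper) projectivisation $P(\mathcal C)$, while the set of such directions is $O(d)$-invariant and non-empty, hence all of $P(\R^d)$ by transitivity --- a contradiction. Both are valid; yours is more hands-on and avoids the Bougerol--Lacroix reference, at the cost of the slightly more delicate Gram-matrix bookkeeping (including the $\abs c=1$ edge case, which you correctly flag), while the paper's proof is shorter where it can lean on the cited results. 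Your closing remark about the negative dominant eigenvalue shortcut under \eqref{Rank1Gauss} is also accurate, and correctly notes that this shortcut does not extend to general \eqref{rotinv}.
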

	
	Our next result is concerned with the density of the law of $H=\sum_{i=1}^b a_ia_i^\top$ in the \eqref{Rank1Gauss} model. 

	The entries of $g\in \Sym$ will be denoted $g_{ij}$ i.e. $g=(g_{ij})$, where $g_{ij}=g_{ji}$. We identify $\Sym$ with $\R^{d(d+1)/2}$. Observe that then the Frobenius norm of $g$ is bounded by the Euclidean norm of the corresponding vector $x$ of $\R^{d(d+1)/2}$, which contains the diagonal and upper diagonal entries of $g$: $\norm{g}_F \le 2 \abs{x}$.

	\begin{lemma}\label{lem:densityGauss}
		Assume \eqref{Rank1Gauss}. If $b > d+1$, then $H=\sum_{i=1}^b a_ia_i^\top$ has a density $f$ with respect to Lebesgue measure on $\R^{d(d+1)/2}$ that satisfies
		$$ f(x)\leq C\left ( 1+\abs{x}^2\right )^{-D}$$
		for all $D>0$ and some constant $C=C(D)$ depending only on $D$.
	\end{lemma}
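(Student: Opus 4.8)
The plan is to compute the density of $H = \sum_{i=1}^b a_i a_i^\top$ explicitly (or estimate it) using the fact that, under \eqref{Rank1Gauss}, $H$ is a Wishart matrix $W_d(b, \Id)$ when $b \ge d$. Since $b > d+1$, the Wishart distribution is absolutely continuous on the cone of positive definite matrices inside $\Sym$, and its density with respect to Lebesgue measure on $\R^{d(d+1)/2}$ is the classical formula
$$ f(x) ~=~ c_{b,d}\, (\det x)^{(b-d-1)/2} \exp\!\big( -\tfrac12 \operatorname{tr}(x) \big) \mathds{1}_{\{x \succ 0\}}, $$
where $c_{b,d}$ is a normalising constant. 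The exponent $(b-d-1)/2$ is nonnegative exactly because $b > d+1$, which is what makes the density bounded (for $b = d$ or $b = d+1$ it would blow up or merely be continuous near $\partial\{x \succ 0\}$). So the first step is to quote/derive this density formula.

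The second step is the elementary estimate: on $\{x \succ 0\}$ one has $0 < \det x \le (\operatorname{tr}(x)/d)^d$ by the AM–GM inequality applied to the eigenvalues, and $\operatorname{tr}(x) \le \sqrt{d}\,\norm{x}_F \le 2\sqrt d\,\abs{x}$ using the bound $\norm{x}_F \le 2\abs{x}$ noted just before the lemma. Hence for every $D > 0$,
$$ f(x) ~\le~ c_{b,d}\, \Big(\tfrac{\operatorname{tr}(x)}{d}\Big)^{d(b-d-1)/2} e^{-\operatorname{tr}(x)/2} ~\le~ c_{b,d}\, \big(C_1 \abs{x}\big)^{d(b-d-1)/2} e^{-C_2 \abs{x}} ~\le~ C(D)\, \big(1+\abs{x}^2\big)^{-D}, $$
where the last inequality holds because a polynomial-times-exponential-decay function is bounded by any inverse polynomial: $t^m e^{-C_2 t} \le C(D,m,C_2)(1+t^2)^{-D}$ for all $t \ge 0$. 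This dispatches the bound on the set $\{x \succ 0\}$; off that set $f = 0$, so the bound is trivial there.

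The main obstacle, such as it is, is purely bookkeeping rather than conceptual: one must be careful that the Wishart density is taken with respect to the correct reference measure. The paper identifies $\Sym$ with $\R^{d(d+1)/2}$ via the diagonal and upper-diagonal entries, and the standard Wishart density formula is sometimes written with respect to $\prod_{i \le j} dx_{ij}$ and sometimes with respect to a measure carrying factors of $2$ on off-diagonal entries; these differ only by a constant Jacobian $2^{d(d-1)/2}$, which can be absorbed into $c_{b,d}$, so it does not affect the stated estimate. One should also remark that $H$ is invertible almost surely (the event $\det H = 0$ has Lebesgue, hence Wishart, measure zero since $b > d$), so the density is supported on an open set and no boundary issues arise. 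Finally, if one prefers to avoid citing the Wishart density, an alternative is to write $H = GG^\top$ with $G$ a $d\times b$ standard Gaussian matrix and obtain the density of $H$ by the change-of-variables / Bartlett decomposition argument; this reproduces the same formula, but quoting the Wishart density directly is cleaner.
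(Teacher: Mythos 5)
Your proof is correct and takes a genuinely different, cleaner route than the paper. You observe directly that under \eqref{Rank1Gauss}, $H$ is a Wishart matrix $W_d(b,\Id)$, quote the classical density $c_{b,d}(\det x)^{(b-d-1)/2}e^{-\operatorname{tr}(x)/2}\mathds{1}_{\{x\succ 0\}}$, and then obtain the bound from AM--GM ($\det x \le (\operatorname{tr}(x)/d)^d$), $\operatorname{tr}(x) \lesssim \abs{x}$, and the elementary fact that exponential decay beats any inverse polynomial. The paper instead re-derives the density from scratch: it writes $H_{\ell,k}=Z_\ell Z_k \skalar{Y_\ell,Y_k}$ where $Z_\ell^2 \sim \chi^2(b)$ and $Y_\ell$ are i.i.d.\ uniform on $S^{b-1}$, invokes a theorem of Stam (1982) for the joint density of the inner products $\skalar{Y_\ell,Y_k}$ (this is where $b>d+1$ enters, to ensure that density is bounded), and then changes variables to assemble the density of $H$. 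The two approaches are essentially different factorizations of the same object; yours has the advantage of leaning on a single standard closed form and making the exponential decay completely transparent, whereas the paper's polar-decomposition argument is more self-contained but considerably longer. One small imprecision: you say the exponent $(b-d-1)/2$ is nonnegative ``exactly because $b>d+1$''; in fact it is nonnegative already for $b\ge d+1$ (zero at $b=d+1$), and nonnegativity is all your estimate needs. This does not affect the validity of your argument under the stated hypothesis $b>d+1$, and your remark on the reference-measure normalization (the constant Jacobian $2^{d(d-1)/2}$) is the right thing to flag.
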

	
	In particular it is possible to choose $D>\frac{d(d+1)}{4}$ in the lemma above, so that the condition \eqref{decay} is satisfied.
	The next result gives that the decay assumptions are sufficient for existence of negative moments, which are required in the previous theorems.

	\begin{theorem}\label{th:integrab}\label{cor:N(A)}

		Assume \eqref{Symm} with \eqref{decay} or \eqref{Rank1} with \eqref{decay1}.
		Then for every $0\leq \delta <1\slash 2$ and $\xi $
		\begin{equation}\label{eq:det}
			\E |\det A|^{-\delta } <\8,
		\end{equation}
		and for every $s<s_0$,
		\begin{equation}\label{eq:NAproof}
			\E (1+\norm{A}^s) N(A)^\varepsilon<\infty
		\end{equation} for some $\varepsilon>0$.
		Further, for every $0<\delta <1$
		$$\E |\langle H e_2, e_1\rangle |^{-\delta} <\8.$$
	\end{theorem}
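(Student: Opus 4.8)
\medskip
\noindent\textbf{Proof strategy.}
The plan is to reduce \eqref{eq:NAproof} to \eqref{eq:det}, and then to establish \eqref{eq:det} and the bound on $\E\abs{\skalar{He_2,e_1}}^{-\delta}$ by integrating explicitly against the bounded, polynomially decaying density supplied by \eqref{decay} or \eqref{decay1}. For the reduction, Cramer's rule $A^{-1}=(\det A)^{-1}\mathrm{adj}(A)$ together with the fact that the entries of $\mathrm{adj}(A)$ are $(d-1)\times(d-1)$ minors of $A$ gives $N(A)\le C_d(1+\norm{A})^{d}\,(1+\abs{\det A}^{-1})$, hence $N(A)^\eps\le C\,(1+\norm{A}^{d\eps})(1+\abs{\det A}^{-\eps})$ for $\eps\le 1$. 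Since $s<s_0$, fix $p>1$ with $sp<s_0$; choosing $\eps>0$ small enough that $(s+d\eps)p<s_0$ and $\eps q<1/2$ with $q=p/(p-1)$, Hölder's inequality combined with $\E\norm{A}^{(s+d\eps)p}<\infty$ and \eqref{eq:det} yields \eqref{eq:NAproof}. Everything therefore reduces to controlling negative moments of $\det A$ and of $\skalar{He_2,e_1}$; I would treat the two models separately, since under \eqref{decay1} the matrix $H$ need not possess a density on $\Sym$ (its rank is at most $b$).

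In model \eqref{Symm}, I would pass to the eigenvalues $\lambda=(\lambda_1,\dots,\lambda_d)$ of $H$ via the Weyl integration formula on $\Sym$: since conjugation by $O(d)$ preserves $\norm{\cdot}_F$, \eqref{decay} bounds the eigenvalue density by $C\,\abs{\prod_{i<j}(\lambda_i-\lambda_j)}\,(1+\abs{\lambda}^2)^{-D}$. Because $\det A=\prod_k(1-\xi\lambda_k)$, this turns $\E\abs{\det A}^{-\delta}$ into an integral over $\R^d$ whose finiteness I would verify by splitting the domain: near the zero set $\{\lambda_k=1/\xi\}$ the Vandermonde factor is bounded and each $\abs{1-\xi\lambda_k}^{-\delta}$ is locally integrable in the asserted range of $\delta$; for $\abs{\lambda}$ large the Vandermonde grows only like $\abs{\lambda}^{d(d-1)/2}$, which is dominated by $(1+\abs\lambda^2)^{-D}$ since $D>d(d+1)/4$. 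For $\skalar{He_2,e_1}$, which equals the single entry $(I-\xi H)_{12}$ up to a factor $\xi$, I would note that its one-dimensional marginal density is bounded — again because $D>d(d+1)/4$ — and then $\int\abs t^{-\delta}(\text{bounded density})\,dt<\infty$ for $\delta<1$.

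In model \eqref{Rank1}, write $V=[a_1|\cdots|a_b]$, so $\sum_i a_ia_i^\t=VV^\t$ and, by Sylvester's identity, $\det A=\prod_{k=1}^{m}(1-\xi\mu_k)$ with $m=\min(b,d)$ and $\mu_1,\dots,\mu_m$ the nonzero eigenvalues of $VV^\t$, i.e.\ the squared singular values of $V$. Passing to the singular value decomposition of $V$ and using $\norm{V}_F^2=\sum_k\mu_k$, \eqref{decay1} bounds the density of $(\mu_1,\dots,\mu_m)$ on $[0,\infty)^m$ by $C\,\abs{\prod_{i<j}(\mu_i-\mu_j)}\,\prod_k\mu_k^{(\abs{d-b}-1)/2}\,(1+\sum_k\mu_k)^{-D}$, the singular-value Jacobian producing the Vandermonde and power factors. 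I would then estimate $\int_{[0,\infty)^m}\prod_k\abs{1-\xi\mu_k}^{-\delta}(\text{density})\,d\mu$ exactly as before: the exponent $(\abs{d-b}-1)/2$ exceeds $-1$, so there is no issue at $\mu_k=0$; the Vandermonde and power factors are bounded near $\{\mu_k=1/\xi\}$; and $D>bd/2$ controls the polynomial growth at infinity. For $\skalar{He_2,e_1}=\sum_{i=1}^b a_{i,1}a_{i,2}=\skalar{u,v}$ with $u=(a_{i,1})_i$, $v=(a_{i,2})_i\in\R^b$, I would first integrate out the $b(d-2)$ coordinates $a_{i,3},\dots,a_{i,d}$, reducing matters to $\int_{\R^b\times\R^b}\abs{\skalar{u,v}}^{-\delta}(1+\abs u^2+\abs v^2)^{-D'}\,du\,dv$ with $D'>b$, and then evaluate this by integrating along the $v$-direction inside the $u$-integral, rescaling $u$ by $\abs v$, and finally integrating in $v$, finiteness holding for $\delta<1$.

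The routine parts are the two reductions and the \eqref{Symm} computation. The part requiring real care is the \eqref{Rank1} model: identifying the correct singular-value Jacobian $\abs{\prod_{i<j}(\mu_i-\mu_j)}\prod_k\mu_k^{(\abs{d-b}-1)/2}$ uniformly across the regimes $b<d$, $b=d$, $b>d$; checking that the average of $f$ over the Stiefel manifold and $O(m)$ still obeys the decay $(1+\sum_k\mu_k)^{-D}$; and verifying convergence of the integrals with the exact, barely sufficient exponents $D>d(d+1)/4$ and $D>bd/2$. This bookkeeping at infinity is the main obstacle.
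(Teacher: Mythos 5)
Your reduction of \eqref{eq:NAproof} to \eqref{eq:det} is essentially the paper's (they use the eigenvalue inequality $\norm{A^{-1}}\le\norm{A}^{d-1}\abs{\det A}^{-1}$ and Cauchy--Schwarz where you use Cramer's rule and H\"older, but the structure is the same). The substantive divergence is in how you prove \eqref{eq:det} and the bound on $\E\abs{\skalar{He_2,e_1}}^{-\delta}$. The paper first establishes a general-purpose lemma (Theorem \ref{th:integrability}): if $W$ is a nonzero polynomial on $\R^m$ in which each variable appears with degree at most $2$, and the density $f$ satisfies $f(x)\le C(1+\abs{x}^2)^{-D}$ with $D>m/2$, then $\int\abs{W}^{-\eps}f<\infty$ for all $\eps<1/2$ (and for $\eps<1$ when each variable has degree at most $1$). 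The lemma is proved by induction on the number of variables, integrating out one variable at a time via a uniform one-dimensional bound from \cite{greenblatt:2005}. The theorem then follows by applying this lemma directly to $W=\det(I-\xi H)$ and to $W=\skalar{He_2,e_1}$, viewed as polynomials in the entries of $H$ (model \eqref{Symm}) or in the coordinates of $(a_1,\ldots,a_b)$ (model \eqref{Rank1}). The appeal is that both models are handled by exactly the same mechanism, with no change of variables.

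Your route is the spectral/random-matrix one: pass to the eigenvalues of $H$ via Weyl integration in model \eqref{Symm}, and to the squared singular values of $V=[a_1\mid\cdots\mid a_b]$ in model \eqref{Rank1}, then estimate the resulting explicit integrals with Vandermonde and power-law Jacobians. This is a genuinely different argument, and I believe it is correct. Its virtue is that it gives sharper constants in some regimes: in the \eqref{Symm} case the singularities of $\prod_k\abs{1-\xi\lambda_k}^{-\delta}$ are at distinct generic points of $\R^d$ with a bounded density nearby, so your integral converges for every $\delta<1$, not merely $\delta<1/2$; the $1/2$ in the paper's statement is an artefact of the degree-$2$ case of Theorem \ref{th:integrability}. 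Its cost is exactly what you identify: you must keep the singular-value Jacobian $\abs{\prod_{i<j}(\mu_i-\mu_j)}\prod_k\mu_k^{(\abs{b-d}-1)/2}$ straight across the cases $b<d$, $b=d$, $b>d$, verify that averaging over $O(d)\times O(b)$ preserves the decay $(1+\sum_k\mu_k)^{-D}$ (this does work, since $\norm{V}_F^2=\sum_k\mu_k$), and check the edge cases at $\mu_k=0$ when $b=d$. For $\skalar{He_2,e_1}$ your argument is also fine, though in the \eqref{Symm} case you should note not just that the marginal density of $H_{12}$ is bounded but that it decays like $(1+t^2)^{-(D-(d(d+1)/2-1)/2)}$, which controls the tail of $\int\abs{t}^{-\delta}$; the hypothesis $D>d(d+1)/4$ gives exactly what is needed. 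Net assessment: your approach is correct and arguably yields more information, but involves considerably more Jacobian bookkeeping than the paper's uniform polynomial lemma.
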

	
	For ease of reference, we end this section by stating that all results are in particular valid for the \eqref{Rank1Gauss} model.

	\begin{corollary}\label{cor:Rank1Gauss}
		Assume \eqref{Rank1Gauss} with $b>d+1$, i.e., 
		$$A =\Id - \frac{\eta}{b} \sum_{i =1}^b a_i a_i^\t, \qquad B ~=~ \frac{\eta}{b} \sum_{i=1}^b a_i y_i$$
		where $a_i$, $1 \le i \le b$ are i.i.d. standard Gaussian random vectors in $\R^d$, independent of $y_i$. Then the assertions of Theorems \ref{th:GuiLeP}, \ref{th:growmoments}, \ref{thm:tailsRn}, \ref{lem:uniform measure} and \ref{th:behavioralpha} hold. The tail index $\alpha$ is strictly decreasing in the step size $\eta$ and strictly increasing in the batch size $b$, provided that $\alpha>1$.
	\end{corollary}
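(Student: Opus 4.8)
The plan is to verify that \eqref{Rank1Gauss} with $b>d+1$ forces all the hypotheses appearing in the five quoted results, and then to read off the two monotonicity statements. First I would trace through the hierarchy of assumptions: Lemma~\ref{lem:hierarchyAssump} gives $\eqref{Rank1Gauss}\Rightarrow\eqref{rotinv}\Rightarrow\eqref{eq:ipnc}$, so the geometric conditions needed by Theorems~\ref{th:GuiLeP}--\ref{lem:uniform measure} are in place. Since $b>d+1$, Lemma~\ref{lem:densityGauss} shows that $H=\sum_{i=1}^b a_ia_i^\top$ has a density satisfying \eqref{decay} (one could equally use \eqref{decay1}), and then Theorem~\ref{cor:N(A)} supplies $\E|\det A|^{-\delta}<\infty$ for $\delta<1/2$, the bound $\E(1+\norm{A}^s)N(A)^\varepsilon<\infty$ for some $\varepsilon>0$ and every $s<s_0$, and $\E|\langle He_2,e_1\rangle|^{-\delta}<\infty$ for $\delta\in(0,1)$. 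It then remains to record three elementary facts about the Gaussian model: $s_0=\infty$ (all polynomial moments of $\norm{A}$ are finite), $\supp(H)$ is unbounded, and $H$ is a.s.\ positive definite for $b\ge d$, so $\E\langle He_1,e_1\rangle>0$. Together these verify, in particular, every hypothesis of Theorem~\ref{th:behavioralpha}, including both parts of \eqref{eq:Hpos2}.

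With this done, I would invoke Theorem~\ref{th:behavioralpha} to obtain the threshold $\xi_1>0$ and the nonempty set $U$; for parameters with $\xi=\eta/b\in U$ --- which, by part~\ref{k2}, includes all small enough $\eta/b$ --- we get $\gamma<0$ and a unique $\alpha=\alpha(\xi)>0$ with $k(\alpha)=1$ (automatically in the interior of $I_k=[0,\infty)$). The residual hypotheses of Theorems~\ref{th:GuiLeP}, \ref{th:growmoments}, \ref{thm:tailsRn} are then immediate: $\E|B|^{\alpha+\epsilon}<\infty$ for every $\epsilon>0$ because $B=\tfrac{\eta}{b}\sum_i a_iy_i$ is a finite sum of products of independent Gaussians; $k(\alpha+\epsilon)<\infty$ for every $\epsilon>0$ since $s_0=\infty$; $\E\norm{A}^{\alpha}N(A)^{\epsilon}<\infty$ for small $\epsilon>0$ by the bound from the first step; and $\P(Ax+B=x)<1$ for all $x$, because conditionally on $(a_i)_i$ the vector $B$ is a nondegenerate Gaussian (its covariance $\tfrac{\eta^2}{b^2}\sum_i a_ia_i^\top$ is invertible for $b\ge d$). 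Theorem~\ref{lem:uniform measure} applies directly under \eqref{rotinv} and the $\log N(A)$-moment. This establishes the first assertion, and the monotonicity in $\eta$ follows at once from Theorem~\ref{th:behavioralpha}, part~\ref{k3}: fixing $b$, the quantity $\xi=\eta/b$ is a positive multiple of $\eta$ while the law of $H$ is unaffected, so $\alpha$ strictly decreases in $\eta$ over the range $\xi\in(0,\xi_1)$, i.e.\ precisely where $\alpha>1$.

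The dependence on $b$ has to be handled separately, because changing $b$ alters both the scaling $\xi=\eta/b$ and the law of $H$, so it is not visible from Theorem~\ref{th:behavioralpha}. Writing $k_b(s)=\E|(\Id-\eta\bar W_b)e_1|^s$ with $\bar W_b:=\tfrac1b\sum_{i=1}^b a_ia_i^\top$, I would use the classical fact that sample means decrease in the convex order: $\bar W_{b+1}$ is the average of its $b+1$ leave-one-out submeans, each distributed as $\bar W_b$, so $\bar W_{b+1}\preceq_{\mathrm{cx}}\bar W_b$. Since $g\mapsto|(\Id-\eta g)e_1|^s$ is convex on $\Sym$ and strictly convex in $ge_1$ when $s>1$, and the submeans differ a.s., this gives $k_{b+1}(s)<k_b(s)$ for all $s>1$. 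Hence, if $\alpha_b>1$ is the root of $k_b\equiv 1$, then $k_{b+1}(\alpha_b)<1$; as $k_{b+1}$ is convex with $k_{b+1}(0)=1$ and $k_{b+1}(s)\to\infty$ (because $|(\Id-\eta\bar W_{b+1})e_1|>1$ on an event of positive probability), it crosses the level $1$ exactly once on $(0,\infty)$, at some $\alpha_{b+1}>\alpha_b$, which is the claimed strict increase. I expect this last point to be the only real obstacle: verifying the hypotheses in the first two steps is essentially bookkeeping, whereas the $b$-monotonicity must be recast as a convex-order comparison, after which one has to argue carefully --- via the convexity of $k_b$ --- that the relevant root sits on the increasing branch, that strictness propagates, and that $\alpha_b$ stays in the regime $\alpha>1$ as $b$ grows.
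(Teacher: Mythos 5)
Your verification of the hypotheses and the derivation of the $\eta$-monotonicity track the paper's proof essentially step for step (hierarchy of assumptions via Lemma~\ref{lem:hierarchyAssump}, density and moment bounds via Lemma~\ref{lem:densityGauss} and Theorem~\ref{th:integrab}, positivity of $\E\langle He_1,e_1\rangle$ from positive semi-definiteness, $\P(Ax+B=x)<1$ from the conditional non-degeneracy of $B$, then Theorem~\ref{th:behavioralpha} for the dependence on $\xi=\eta/b$). Where you genuinely diverge is the monotonicity in $b$: the paper explicitly flags that this does not follow from Theorem~\ref{th:behavioralpha} and then simply refers the reader to \cite[C.3, part~(II)]{Guerbuezbalaban2021supp}, remarking that the argument carries over under \eqref{rotinv}. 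You instead give a self-contained convex-order argument: expressing $\bar W_{b+1}$ as the average of its $b+1$ leave-one-out submeans (each $\eqdist \bar W_b$) yields $\bar W_{b+1}\preceq_{\mathrm{cx}}\bar W_b$, so applying the convex (strictly convex in the direction of $ge_1$ for $s>1$) function $g\mapsto|(\Id-\eta g)e_1|^s$ gives $k_{b+1}(s)<k_b(s)$ for $s>1$; combining this with the convexity of $k_{b+1}$, $k_{b+1}(0)=1$, and $k_{b+1}(s)\to\infty$ places the root $\alpha_{b+1}$ strictly above $\alpha_b$. This route is more transparent than an appeal to an external supplement, and it makes clear exactly which structural features (the Gaussian law only enters via i.i.d.-ness of $a_i$ and absolute continuity for the strictness step) are being used. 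One small point worth making explicit if you flesh this out: the strict Jensen step requires that the first columns of the leave-one-out means are not all equal with probability one, which is where absolute continuity of the law of the $a_i$ (not just rotation invariance) is invoked.
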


	\section{Finite Iterations: Proofs of Theorems \ref{th:growmoments} and \ref{thm:tailsRn}}\label{sect:tails}
	
	We start by proving \eqref{eq:momentgrowth}, that is, $\E \abs{R_n}$ is of precise order $n$ as $n \to \infty$.
	
	\begin{proof}[Proof of Theorem \ref{th:growmoments}] By Prop. \ref{prop:transferoperators}, for any $x\in \R ^d$, 
		$ r_{\a}(x\slash |x|) |x|^\a  ~=~ c \int_S \abs{\skalar{x,y}}^{\a} \nu ^*_{\a}(dy).$
		Substituting $x=R _n$ and taking expectations of both sides, we have
		\begin{equation*}
			\E r_{\a}\left (\frac{R_n}{|R_n|}\right ) |R_n|^\a  ~=~ c \int_S \E \abs{\skalar{y,R_n}}^{\a} \nu ^*_{\a}(dy)=:b_n
		\end{equation*}
		\noindent \textbf{Step 1}: We shall prove that $\lim_{n \to \infty}\frac{b_n}{n}=C$,
		where
		$$ C =\int _S\E \left [\abs{\skalar{y,A_1R^{(1)} +B_1}}^{\a} -\abs{\skalar{y,A_1R^{(1)} }}^{\a}\right ]\nu ^*_{\a}(dy).$$
		Since $R_n^{(1)}$ has the same law as $R _n$ we may write
		\begin{equation*}
			b_n=\E r_{\a}\left (\frac{R_n^{(1)}}{|R_n^{(1)}|}\right ) |R_n^{(1)} |^\a .
		\end{equation*}
		Using that $r_{\a}$ is an eigenfunction of $P^{\a}$ with the eigenvalue 1, we deduce that
		\begin{align*}
			b_n=&\E \left [P^{\a} (r_{\a})\left (\frac{R_n^{(1)}}{|R_n^{(1)}|}\right ) |R_n^{(1)}|^\a \right ]
			=\E \left [ r_{\a}\left (A_1\cdot \frac{R_n^{(1)}}{|R_n^{(1)}|}\right )
			\left |A_1\frac{R_n^{(1)}}{|R_n^{(1)}|}\right |^{\a} |R_n^{(1)}|^\a \right ]\\
			=&\E \left [ r_{\a}\left (A_1\cdot \frac{R_n^{(1)}}{|R_n^{(1)}|}\right )
			\left |A_1R_n^{(1)} \right |^\a \right ]
			=c \int_S \E \abs{\skalar{y,A_1R_n^{(1)}}}^{\a} \nu ^*_{\a}(dy).
		\end{align*}
		We used Fubini in the last line to change integral and expectation. Using that $R_{n+1}=A_1R_n^{(1)}+B_1$ a.s, we obtain for $b_{n+1}$ the expression
		\begin{equation*}
			b_{n+1}=c \int_S \E \abs{\skalar{y,A_1R_n^{(1)} +B}}^{\a} \nu ^*_{\a}(dy)
		\end{equation*}
		and consequently 
		\begin{equation}\label{eq:bnbn1}
			b_{n+1}-b_n=\int _S\E \left [\abs{\skalar{y,A_1R_n^{(1)}  +B_1}}^{\a} -\abs{\skalar{y,A_1R_n^{(1)}}}^{\a}\right ]\nu ^*_{\a}(dy).
		\end{equation}
		If we are allowed to take the limit inside the expectation, then it immediately follows $\lim _{n\to \8 } b_{n+1}-b_n=C$; and, using a telescoping sum, also the desired result  $\lim _{n\to \8} \frac{b_n}{n}=C.$
		
		Hence, we have to come up with an uniform bound for the term inside the expectation.
		
		\noindent	 \textbf{Case $\a \leq 1$}:
		\begin{equation*}
			\Big|\abs{\skalar{y,A_1R_n^{(1)} +B_1}}^{\a} -\abs{\skalar{y,A_1R_n^{(1)} }}^{\a}\Big|\leq \Big|\skalar{y,A_1R_n^{(1)} +B_1} -\skalar{y,A_1R_n^{(1)} }\Big|^{\a}\leq |\skalar {y, B_1}|^{\a }
		\end{equation*}
		and $\int _S\E |\skalar {y, B_1}|^{\a } \nu_\alpha^*(dx) \le \E |B_1|^\alpha < \8$. 
		
		\noindent	 	 \textbf{Case $\a > 1$}: We use the inequality $||a|^{\a}-|b|^{\a}|\leq \a |a-b|\max \{|a|^{\a -1}, |b|^{b-1} \}$ to infer
		\begin{align*}
			&\Big|\abs{\skalar{y,A_1R_n^{(1)} +B_1}}^{\a} -\abs{\skalar{y,A_1R_n^{(1)} }}^{\a}\Big| \\[.2cm]
			\leq&~ \a  |\skalar {y, B_1}|\max \left \{\abs{\skalar{y,A_1R_n^{(1)} +B_1}}^{\a -1}, \abs{\skalar{y,A_1R_n^{(1)} }}^{\a -1} \right \}\\[.2cm]
			\leq&~ \a 2^\a | B_1| \left ( |A_1R_n^{(1)} |^{\a -1} +|B_1|^{\a -1}\right ) 
			~\leq~ \a 2^\a\left (|B_1|^{\a}+|B_1|\| A_1\| ^{\a -1}|R_n^{(1)} |^{\a -1}\right ). 
		\end{align*}
		Observe that $|B_1|\| A_1\| ^{\a -1}$ and $|R_n^{(1)} |^{\a -1}$ are independent and 
		\begin{equation*}
			\E |B_1|\| A_1\| ^{\a -1}\leq (\E |B_1|^{\a})^{1\slash \a}(\E \|A_1\|^{\a})^{(\a -1)\slash \a}<\8 .
		\end{equation*}
		So it remains to dominate $ |R_n^{(1)} |^{\a -1}$ independently of $n$ by an integrable function. We have
		$	|R_n|\leq \sum _{i=1}^{\8 }\| \Pi_{i-1}\| |B_i|.$
		Hence for $\a -1 \leq 1$, 
		\begin{equation}\label{eq:geometric}
			|R_n|^{\a -1}\leq \sum _{i=1}^{\8 }\| A_1...A_{i-1}\| ^{\a -1} |B_i| ^{\a -1}.
		\end{equation}
		Since $k$ is log-convex (see Prop. \ref{prop:transferoperators}), there is $\rho <1 $ such that $\E \| \Pi_{i-1}\| ^{\a -1}\leq c\rho ^n$ and so 
		the right hand side of \ref{eq:geometric} is integrable. If $\a >2$ then by the Minkowski inequality we have 
		\begin{align*}
			\left ( \E \left [\sum _{i=1}^{\8 }\|\Pi_{i-1}\| |B_i| \right ]^{\a -1}\right )^{1\slash (\a -1)}&~\le~\sum _{i=1}^{\8 }\left (\E \| \Pi_{i-1}\|^{\a -1} \E |B_i| ^{\a -1} \right )^{1\slash (\a -1)}\\
			&~\le~ \sum _{i=1}^{\8 }\rho ^{(i-1)\slash (\a -1)}\left ( \E |B_1| ^{\a -1} \right )^{1\slash (\a -1)}<\8 .
		\end{align*}
		This shows that we may use dominated convergence in \eqref{eq:bnbn1} to infer that $\lim_{n \to \infty} \frac{b_n}{n}=C.$ 
		
		\medskip

		\noindent \textbf{Step 2}: To obtain the announced results for $\E |R_n|^\alpha$ (without the eigenfunction $r_\alpha$), observe that 
		\begin{equation*}
			b_n\leq c\E |R_n|^{\a}.
		\end{equation*}
		From this, the lower bound follows. 
		We shall prove that the inverse inequality also holds i.e. there is $c_1$ such that
		$c_1\E |R_n|^{\a}\leq  \int _{S } \E \abs{\skalar{y,R_n}}^{\a} \nu ^*_{\a}(dy)$.
		Indeed, the function
		\begin{equation*}
			S\ni x \mapsto  \int_S \abs{\skalar{y,x}}^{\a} \nu ^*_{\a}(dy)
		\end{equation*}
		is continuous and it attains its minimum $c_1$ on $S$. Suppose that $c_1=0$. Then there is $x_1\in S$ such that
		$\abs{\skalar{y,x_1}}^{\a}=0$ on the support of $\nu ^*_{\a}$ i.e. $\mathrm{supp}\ \nu ^*_{\a}$ is contained in a hyperplane orthogonal to $x_1$ which is impossible by Prop. \ref{prop:transferoperators}. Therefore,  
		\begin{equation*}
			\int _{S } \abs{\skalar{y,R_n|R_n|^{-1}}}^{\a} \nu ^*_{\a}(dy)\geq c_1, \quad \mbox{if} \ R_n\neq 0
		\end{equation*}
		and thus $\E \int _{S } \abs{\skalar{y,R_n}}^{\a} \nu ^*_{\a}(dy)\geq c_1 \E |R_n|^{\a}.$	
	\end{proof}
	
	We continue with the proof of Theorem \ref{thm:tailsRn} that provides an upper bound for the tails of finite iterations $R_n$.

	\begin{proof}[Proof of Theorem \ref{thm:tailsRn}]
		Using a union bound, the summability of $\frac{1}{k^2}$ as well as Markov's inequality, we have
		\begin{align*}
			& \P(R_n > t) ~\le~ \sum_{m=1}^n \P\left( \norm{\Pi_{m-1}}|B_m| > \frac{6t}{\pi^2 m^2} \right) \\
			\le& \sum_{m=1}^n \E \norm{\Pi_{m-1}}^{\a+\eps} \E |B_m|^{\a+\eps} \cdot \frac{(\pi^2 m^2)^{\a +\eps}}{(6t)^{\a +\eps}} 
			\le~ \sum_{m=1}^n C \cdot k(\a +\eps)^{m-1} \E |B_m|^{\a+\eps} \frac{(\pi^2 m^2)^{\a +\eps}}{(6t)^{\a +\eps}} \\
			~\le& n C \cdot k(\a +\eps)^{n-1} \E |B_1|^{\a+\eps} \frac{(\pi^2 n^2)^{\a +\eps}}{6^{\a +\eps}} \cdot t^{-(\a+\eps)} ~=:~ C_n t^{-(\a+\eps)}
		\end{align*}
		where we have further used that $\E \norm{\Pi_m}^s \le C_sk(s)^m$ for each $m \in \N$ and $k(\alpha+\epsilon)>1$ due to the convexity of $k$.
	\end{proof}

			\section{Evaluation of tail index and stationary measure: The proofs of Theorems \ref{lem:uniform measure} and \ref{th:behavioralpha} }\label{sect:tailindex}
			
			\subsection{Identification and Differentiability of $k(s)=\E | (I-\xi H)e_1| ^s$}
			
			We start by proving Theorem \ref{lem:uniform measure} that identifies $\nus$ as the uniform measure on $S$ and provides a tractable formula for $k(s)$ under the assumption of rotational invariance.

			\begin{proof}[Proof of Theorem \ref{lem:uniform measure}]
				Let $\xi = \eta \slash b$. For $y \in S$, let $\od$ be an orthogonal matrix such that $y=\od e_1$. We have for any $f \in \Cf(S)$,
				\begin{align}
					&P^sf(y)= P^s f(ke_1)
					=\E f\left ((I-\xi H)\cdot (\od e_1)\right )| (I-\xi H)(\od e_1)| ^s \notag\\
					=&\E f\left (\od(I-\xi \od^{-1}H\od)\cdot e_1\right )| \od(I-\xi \od^{-1}H\od)e_1| ^s \notag \\
					=&\E f\left (\od(I-\xi H)\cdot e_1\right )| (I-\xi H)e_1| ^s \label{eq:eigenfunction_e1}
				\end{align}
				because $\od^{-1}H\od$ has the same law as $H$ and $\od$ preserves the norm. Using this for $f \in \Cf(S)$ with $\norm{f}=1$, we obtain
				$$ |P^s f(y)| \le \E \norm{f} | (I-\xi H)e_1| ^s \le \E | (I-\xi H)e_1| ^s.$$
				In particular, the spectral radius of $P^s$ is bounded by $\E \abs{(I-\xi H)e_1}^s$, thus by Prop. \ref{prop:transferoperators}, $k(s)\le \E \abs{(I-\xi H)e_1}^s$. Using the representation \eqref{eq:uniform.measure} of the uniform measure on $\S$ together with \eqref{eq:eigenfunction_e1}, we obtain further that
				\begin{align*}
					\int _SP^sf(y)\  \s (dy) =& \int_{O(d)} P^s f(\od e_1) \ d\od
					=&\int _{O(d)}\E f\left (\od (I-\xi H)\cdot e_1\right )| (I-\xi H)e_1| ^s\ d\od 
				\end{align*}

				Now we use Fubini and observe that for a fixed $H$
				\begin{equation*}
					\int _{O(d)} f\left (\od (I-\xi H)\cdot e_1\right )\ d\od =\int _{O(d)} f\left (\od \cdot ((I-\xi H)\cdot e_1)\right )\ d\od= \int _{O(d)} f\left (\od\cdot e_1\right )\ d\od .
				\end{equation*} 
				Therefore, 
				\begin{equation*}
					\int P^sf(y)\  \s (dy) =\left(\int _Sf(y)\ \s (dy)\right )\E | (I-\xi H)e_1| ^s , 
				\end{equation*}
				hence $\sigma$ is an eigenmeasure of $P^s$ with eigenvalue $\E | (I-\xi H)e_1| ^s$. Since the latter one is also the upper bound on the spectral radius and on $k(s)$, we infer that we have indeed found the dominating eigenvalue und the corresponding eigenmeasure. Thus, due to the uniqueness results of Prop. \ref{prop:transferoperators}, $\nu _s=\s $ and $k(s)= \E | (I-\xi H)e_1| ^s$.
				
				The formula for the Lyapunov exponent then follows from Prop. \ref{prop:diffk}.
			\end{proof}

		Propositions \ref{prop:transferoperators} and \ref{prop:diffk} provide general results about the differentiability of $k$ as a function of $s$. We now focus on the particular instance of \eqref{rotinv}, where
		$$k(s)=\E | (I-\xi H)e_1| ^s=:h(\xi,s),$$ and are going to prove that $h(\xi,s)$ is $\Cf^1$.
		
		\begin{lemma}\label{lem:partialdiffks}
			Assume \eqref{rotinv} and that $\E |\langle He_2,e_1\rangle |^{-\delta}<\infty$ for some $\delta>0$. Then
			\begin{equation*}
				\frac{\partial h}{\partial s}= \E | (I-\xi H)e_1| ^s\log | (I-\xi H)e_1| 
			\end{equation*}
			exists and  is continous on $(0,\8 )\times [0,s_0)$ with
			\begin{equation}\label{eq:diff0}
				\frac{\partial h}{\partial s}(\xi ,0)= \E \log | (I-\xi H)e_1| .
			\end{equation}
		\end{lemma}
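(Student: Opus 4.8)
The plan is to differentiate under the expectation sign, with the main difficulty being an integrable bound near $\xi = 0$ and, separately, near $s = 0$. Write $\phi(\xi, s) := |(I-\xi H)e_1|^s$. For fixed $\xi > 0$ and $H$, we have $\partial_s \phi(\xi,s) = |(I-\xi H)e_1|^s \log |(I-\xi H)e_1|$, so the candidate formula is clear; the issue is justifying the interchange $\partial_s \E \phi = \E \partial_s \phi$ locally uniformly in $(\xi, s)$, and likewise continuity of $(\xi,s) \mapsto \E[\phi \log \phi]$. By Proposition \ref{prop:transferoperators} the function $s \mapsto k(s) = h(\xi, s)$ is already known to be finite and convex (hence continuous, with one-sided derivatives) on $I_k$; what is new here is joint regularity in $(\xi, s)$ and differentiability down to $s = 0$.

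First I would establish a two-sided bound on $|(I-\xi H)e_1|$. The upper bound $|(I-\xi H)e_1| \le 1 + \xi \norm{H}$ is immediate, which via $|\log t| \le t + t^{-1}$ type estimates controls the positive part of the logarithm and, together with $\norm{A}^s \in L^1$ for $s < s_0$, the contribution of $\{|(I-\xi H)e_1| \ge 1\}$. The delicate region is $\{|(I-\xi H)e_1| < 1\}$, where $\log|(I-\xi H)e_1|$ is negative and may be large. Here I would use the lower bound $|(I-\xi H)e_1| \ge \xi |\langle H e_2, e_1 \rangle|$: indeed, the $e_2$-component of $(I - \xi H)e_1$ equals $-\xi \langle H e_1, e_2 \rangle = -\xi \langle H e_2, e_1 \rangle$ by symmetry of $H$, and $|(I-\xi H)e_1|$ dominates the absolute value of any one coordinate. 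Hence $|\log |(I-\xi H)e_1|| \le |\log \xi| + |\log |\langle H e_2, e_1\rangle||$ on that region, and $\E |\log|\langle H e_2, e_1\rangle|| < \infty$ because the hypothesis $\E |\langle H e_2, e_1\rangle|^{-\delta} < \infty$ gives, via $|\log u| \le \delta^{-1}(u^{-\delta} + u^{\delta})$ for $u > 0$ (or simply $-\log u \le \delta^{-1} u^{-\delta}$ for $u \le 1$), integrability of the negative logarithmic moment. Rotational invariance is not needed for this estimate itself, but it is what lets us phrase everything through the single random variable $(I-\xi H)e_1$ via Theorem \ref{lem:uniform measure}.

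With these bounds in hand, on any compact set $[\xi_0, \xi_1] \times [0, s_1] \subset (0,\infty) \times [0, s_0)$ I would produce a dominating function for $\partial_s \phi$: using $\phi(\xi,s) = |(I-\xi H)e_1|^s \le \max\{1, (1+\xi_1\norm{H})^{s_1}\}$ and the bound on $|\log |(I-\xi H)e_1||$ just obtained, the product $|\phi \log \phi|$ is dominated by an integrable function independent of $(\xi, s)$ in that box (the $|\log \xi|$ term is harmless since $\xi$ is bounded away from $0$ on the box). Standard differentiation-under-the-integral then yields existence of $\partial h/\partial s$ with the stated formula on $(0,\infty) \times [0, s_0)$, where at $s = 0$ this is a right derivative and gives precisely \eqref{eq:diff0}, since $\phi(\xi, 0) \log \phi(\xi, 0) = \log|(I-\xi H)e_1|$. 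Continuity of $\partial h / \partial s$ on $(0,\infty) \times [0, s_0)$ follows from the same domination plus dominated convergence: if $(\xi_n, s_n) \to (\xi, s)$ within the domain we may assume all terms lie in a common compact box, the integrands converge pointwise a.s., and the uniform integrable majorant lets us pass to the limit. The main obstacle is genuinely the control near $\xi = 0$ of the negative logarithm, which is why the hypothesis on $\E|\langle H e_2, e_1\rangle|^{-\delta}$ enters; note that this is exactly why the lemma is stated on $(0,\infty)$ and not at $\xi = 0$, and why (as remarked in the paper) the analogous statement fails for $s < 1$ where $\partial h/\partial s$ becomes singular as $\xi \to 0$.
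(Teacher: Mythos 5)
Your argument is correct and follows essentially the same route as the paper: both proofs dominate the negative part of the logarithm on $\{|(I-\xi H)e_1|<1\}$ by the same lower bound $|(I-\xi H)e_1| \ge \xi|\langle He_2,e_1\rangle|$ (the paper writes it via the full Euclidean expansion over $j=2,\dots,d$, but extracts the same single off-diagonal coordinate), use $\E|\langle He_2,e_1\rangle|^{-\delta}<\infty$ to integrate it, and control the region $\{|(I-\xi H)e_1|\ge1\}$ by moments of $\norm{A}^{s+\varepsilon}$, with dominated convergence finishing the differentiation under the expectation. Your compact-box bookkeeping makes the joint $(\xi,s)$-continuity a bit more explicit than the paper's proof, which fixes $\xi$ and works only in $s$, but there is no difference of substance.
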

		
		\begin{proof}
			Fix $\xi>0$ and let $s \in [0,s_0)$. By the mean value theorem there is $\theta \in (0,1)$ such that
			\begin{align*}
				\frac{h(\xi,s+v)-h(\xi,s)}{v} ~&=~ \frac{1}{v} \E \Big[ \abs{(I-\xi H)e_1}^{s+v} - \abs{(I-\xi H)e_1}^s \Big] \\
				~&=~ \E \Big[ \abs{(I-\xi H)e_1}^{s+\theta v} \log \abs{(I-\xi H)e_1} \Big]
			\end{align*}
			Thus, the asserted differentiability follows by dominated convergence once we can provide  a bound for the right hand side that is uniform in $\theta$. 
			
			Observe that for any small $\varepsilon >0$, it holds $|\log x| < |x|^\varepsilon$ for $x>1$ and $|\log x| \le |x|^{-\varepsilon}$ for $0 < x <1$, hence
			\begin{align*}
				& \E \abs{(I-\xi H)e_1} ^s\log \abs{(I-\xi H)e_1} \\
				\le & \E \abs{(I-\xi H)e_1} ^{s+\varepsilon} \mathds{1}_{\{\abs{(I-\xi H)e_1} \ge 1\}} + \E \abs{(I-\xi H)e_1} ^{s-\varepsilon} \mathds{1}_{\{\abs{(I-\xi H)e_1} < 1\}} =:T_1+ T_2.
			\end{align*}

			The first term $T_1$ is finite as long as $s+\varepsilon<s_0$. Considering $T_2$, we only need to consider the case where $s-\varepsilon <0$. We use that
			\begin{equation*}
				\abs{(I-\xi H)e_1}= \left ( (1-\xi \langle He_1,e_1\rangle )^2+\xi ^2\sum _{j=2}^d \langle He_j,e_1\rangle )^2 \right) ^{1\slash 2}\geq \xi |\langle He_j,e_1\rangle | .
			\end{equation*}
			Therefore, if $|\langle He_j,e_1\rangle |^{-\varepsilon}$ is integrable for all $j$, then $T_2$ is finite for any $\xi $. Observe that by \eqref{rotinv}, for any function $f$ and $\od \in O(d)$ chosen such that $\od e_2=e_j$, $\od e_1=e_i1$ ($j\neq i$)
			\begin{equation*}
				\E f(\langle He_j,e_i\rangle)= \E f(\langle H\od e_2,\od e_1\rangle )=\E f(\langle \od^TH\od e_2,e_1\rangle )=
				\E f(\langle He_2,e_1\rangle ).
			\end{equation*}
			Therefore, integrability of a matrix coefficient $|\langle He_j,e_i\rangle |^{\varepsilon}$, $j\neq i$ is equivalent
			to integrability of $|\langle He_2,e_1\rangle |^{-\varepsilon}$.
		\end{proof}

			\medskip
			Existence of $\frac{\partial h}{\partial \xi}$ is a little bit more complicated, in particular for $1<s<2$ and it is not well clarified in \cite{Guerbuezbalaban2021supp}. We have the following lemma. 
			\begin{lemma}\label{lem:diff}
				Suppose $s_0> 2$. Then 
				$\frac{\partial h}{\partial \xi}$ exists and is continuous on  $(0,\8 )\times [2,s_0) $ 
				with 
				\begin{equation}\label{eq:difformula0}
					\frac{\partial h}{\partial \xi}(0,s)=-s\E \langle He_1,e_1\rangle .
				\end{equation}
				If in addition $\E |\langle He_2,e_1\rangle |^{-\delta}<\infty$  holds for every $0<\delta <1$, 
				then $\frac{\partial h}{\partial \xi}$ exists and  is continuous on $(0,\8 )\times (1,s_0)$ .  
			\end{lemma}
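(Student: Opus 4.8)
The strategy is to differentiate under the expectation sign, justified by dominated convergence, treating separately the two regimes $s\ge 2$ and $1<s<2$. Write $\varphi(\xi) := |(I-\xi H)e_1|^2 = (1-\xi\langle He_1,e_1\rangle)^2 + \xi^2\sum_{j=2}^d \langle He_j,e_1\rangle^2$, a quadratic polynomial in $\xi$ with nonnegative values, so that $h(\xi,s) = \E\, \varphi(\xi)^{s/2}$. For $\xi>0$ in a neighbourhood of a fixed point, the candidate derivative is
\begin{equation*}
\frac{\partial h}{\partial \xi}(\xi,s) = \frac s2\, \E\Big[ \varphi(\xi)^{s/2-1}\, \varphi'(\xi)\Big],
\end{equation*}
with $\varphi'(\xi) = -2\langle He_1,e_1\rangle(1-\xi\langle He_1,e_1\rangle) + 2\xi\sum_{j=2}^d\langle He_j,e_1\rangle^2$. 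To legitimise this I would fix a compact $\xi$-interval $[\xi_0,\xi_1]\subset(0,\infty)$ and bound the difference quotient via the mean value theorem by $\frac s2 \sup_{\zeta\in[\xi_0,\xi_1]} \varphi(\zeta)^{s/2-1}|\varphi'(\zeta)|$, then produce an integrable majorant. Since $\varphi'$ is a linear polynomial and $\varphi$ a quadratic one with $\varphi(\zeta)^{1/2}\le |(I-\zeta H)e_1|\le 1+\zeta_1\norm{H}$, the factor $|\varphi'(\zeta)|$ and the factor $\varphi(\zeta)^{s/2-1}$ when $s\ge 2$ are both controlled by polynomials in $\norm{H}$ of degree at most $s$; integrability follows from $s_0>s$, i.e.\ $\E\norm{H}^s<\infty$. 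Continuity of $\partial h/\partial\xi$ on $(0,\infty)\times[2,s_0)$ is then a second dominated-convergence argument with the same majorant (now over a compact $(\xi,s)$-box with $s$-exponent slightly enlarged, still below $s_0$). The boundary value \eqref{eq:difformula0} at $\xi=0$ is obtained from the explicit formula: $\varphi(0)=1$, $\varphi'(0) = -2\langle He_1,e_1\rangle$, so $\frac{\partial h}{\partial\xi}(0,s) = \frac s2\E[-2\langle He_1,e_1\rangle] = -s\,\E\langle He_1,e_1\rangle$, after checking that the difference-quotient majorant near $\xi=0$ stays integrable (here $\varphi(\zeta)^{s/2-1}$ is bounded above because $\varphi(\zeta)\le (1+\zeta_1\norm H)^2$ and bounded below away from $0$ only if $s\ge 2$, which is exactly why this case is clean).

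For the range $1<s<2$, the exponent $s/2-1\in(-\tfrac12,0)$ is negative, so $\varphi(\zeta)^{s/2-1}$ blows up wherever $\varphi(\zeta)$ is small, and the majorant above fails near $\xi=0$. This is the main obstacle. The remedy is to bound $\varphi(\zeta)$ from below using only the off-diagonal coordinates: as in the proof of Lemma \ref{lem:partialdiffks}, $\varphi(\zeta)\ge \zeta^2\langle He_2,e_1\rangle^2$, hence on $[\xi_0,\xi_1]$ with $\xi_0>0$ we get $\varphi(\zeta)^{s/2-1}\le \xi_0^{s-2}|\langle He_2,e_1\rangle|^{s-2}$. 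Since $s-2\in(-1,0)$, the hypothesis $\E|\langle He_2,e_1\rangle|^{-\delta}<\infty$ for all $\delta\in(0,1)$ — transferred to $|\langle He_j,e_1\rangle|$ for every $j\ge 2$ by the rotational-invariance computation already carried out in Lemma \ref{lem:partialdiffks} — gives $\E|\langle He_2,e_1\rangle|^{s-2}<\infty$. Combining this with the polynomial bound on $|\varphi'(\zeta)|$ and on $\varphi(\zeta)^{s/2-1}$ via $\varphi(\zeta)^{1/2}\le 1+\xi_1\norm H$ (we may split $\varphi(\zeta)^{s/2-1}$ as $\varphi(\zeta)^{s/2-1}\le \min\{\xi_0^{s-2}|\langle He_2,e_1\rangle|^{s-2},\ (\text{const})\}$, or just use the negative-power bound throughout and multiply by the degree-$\le 1$ polynomial $|\varphi'(\zeta)|$), one obtains an integrable majorant of the form $C(\norm H^{?} + |\langle He_2,e_1\rangle|^{s-2}\norm H^{?})$ whose integrability needs $\E\norm H^{m}<\infty$ for $m$ slightly above $s$ together with a Hölder split to separate the negative moment of $|\langle He_2,e_1\rangle|$ from the positive moment of $\norm H$; here $s_0>s$ provides the needed room (and one should note $\E|\langle He_2,e_1\rangle|^{s-2}\norm H$ is finite by Hölder since $(s-2)q$ can be kept in $(-1,0)$ for $q$ close to $1$ while $\norm H$ has all moments below $s_0>2>1$).

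Having the difference quotients dominated, differentiation under the integral yields existence of $\partial h/\partial\xi$ on $(0,\infty)\times(1,s_0)$; continuity there follows by the same majorant upon restricting to compact sub-boxes bounded away from $\xi=0$ and from $s=s_0$, applying dominated convergence to $\frac s2\varphi(\xi)^{s/2-1}\varphi'(\xi)$ as $(\xi,s)$ varies (the integrand is jointly continuous in $(\xi,s)$ for every fixed outcome, since $\varphi(\xi)>0$ a.s.\ for $\xi>0$ — this is where $\xi>0$ is essential and why the boundary value at $\xi=0$ is only attained on the cleaner strip $s\ge 2$). The only subtlety to flag is that the majorant on a box $[\xi_0,\xi_1]\times[s_-,s_+]\subset(0,\infty)\times(1,s_0)$ must use the largest exponent $s_+$ in the positive-power terms and the most negative exponent $s_--2$ in the negative-power term, both of which remain admissible; assembling these into a single integrable function completes the argument.
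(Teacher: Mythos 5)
Your proposal is correct and follows essentially the same route as the paper: differentiate under the expectation via the mean value theorem and dominated convergence, treat $s\ge 2$ by a polynomial bound in $\norm{H}$ of total degree $s$ (needing $s<s_0$), and treat $1<s<2$ by bounding the negative power $\varphi(\zeta)^{s/2-1}$ from above via the lower bound $\varphi(\zeta)\ge\zeta^2\langle He_2,e_1\rangle^2$ together with the assumed finiteness of $\E|\langle He_2,e_1\rangle|^{-\delta}$, the rotational-invariance transfer, and a Hölder split. You are somewhat more explicit than the paper about the Hölder step needed to combine the negative moment of the off-diagonal entry with the positive moment of $\norm H$, and about why $\xi>0$ is essential for the joint continuity of the integrand; both are useful clarifications but do not change the argument.
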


			\begin{proof}[Proof of Lemma \ref{lem:diff}]
				Let
				\begin{equation*}
					g(\xi )= \langle (I-\xi H)e_1, (I-\xi H)e_1\rangle . 
				\end{equation*}
				By the mean value Theorem, for some $\theta \in (0,1)$,
				\begin{align*}
					\frac{h(\xi +v,s)-h(\xi ,s)}{v}=&\frac{1}{v}\E \left (g(\xi +v)^{s\slash 2}-g(\xi )^{s\slash 2}\right )  \\
					=&\frac{s}{2}\E g(\xi +\theta v)^{s\slash 2-1}g'(\xi +\theta v).
				\end{align*}
				Suppose we were able to dominate $g(\xi +\theta v)^{s\slash 2-1}g'(\xi +\theta v)$ independently of $\theta v$ by an integrable function. Then by dominated convergence
				\begin{equation}\label{eq:difformula}
					\frac{\partial h}{\partial \xi}=s\E | (I-\xi H)e_1| ^{s-2} \left( -\langle He_1, e_1\rangle +\xi \langle He_1, He_1\rangle \right). 
				\end{equation}
				\textbf{Case $s\ge2$}: The right hand side is dominated by $C(\xi)\| H\| ^s$ and so it is finite. $C(\xi)\| H\| ^s$ is also in the right domination in the Lebesgue convergence theorem. In particular, \eqref{eq:difformula0} holds.
				\\
				\textbf{Case $1 <s<2$}: Observe that
				\begin{equation*}
					|g'(\xi +\theta v)|\leq C(\xi)(1+\| H\|^2)
				\end{equation*}
				and
				\begin{align*}
					g(\xi +\theta v)^{s\slash 2-1}=&| (I-(\xi +\theta v)H)e_1| ^{s-2}\\
					=& \left ( (1-(\xi +\theta v) \langle He_1,e_1\rangle )^2+(\xi +\theta v) ^2\sum _{j=2}^d \langle He_j,e_1\rangle )^2 \right) ^{(s-2)\slash 2}
				\end{align*}
				with $-1<s-2<0$.
				As before, 
				\begin{equation*}
					g(\xi +\theta h)^{s\slash 2-1}\leq (\xi\slash 2) ^{s-2} |\langle He_j,e_1\rangle |^{s-2}
				\end{equation*}
				for sufficiently small $\theta v$. 	Similar to the proof of Lemma \ref{lem:partialdiffks}, integrability of $|\langle He_j,e_i\rangle |^{-\delta}$ for some positive $\delta $ is equivalent to integrability of $|\langle He_2,e_1\rangle |^{-\delta}$.
				Therefore, we may apply the Lebesgue convergence and we obtain \eqref{eq:difformula}  for $\xi >0$.
			\end{proof}
			
			\begin{remark}
				It follows from the proof that if  $\E |\langle He_2,e_1\rangle |^{-\delta} <\8$ for $\delta \leq \delta _0<1$ then $h \in \Cf^1 \left ( (0,\8 )\times (2-\delta _0,s_0) \right )$
			\end{remark}	
			
			\subsection{The shape of the tail index function $\a (\xi)$: Proof of Theorem \ref{th:behavioralpha} }
			Now using differentiability of $h$ we study dependence of the tail index on $\xi $.

			\begin{proof}[Proof of Theorem \ref{th:behavioralpha}] Using that $s_0=\infty$ and the assumption that $\supp(H)$ is unbounded, a simple calculation shows that for every fixed $\xi \in U$, $\lim_{s \to \infty} h(\xi,s)=\infty$.
				 By Prop. \ref{prop:transferoperators}, for fixed $\xi$, the function $s\mapsto h(\xi,s)$ is strictly convex, hence for every $\xi \in U$ there is a unique $\alpha(\xi)>0$ with $h(\xi, \alpha(\xi))=1$. 
				
				\noindent \textbf{Step 1}:
				Observe, that for every $s>1$, $h(\cdot , s)$ is a strictly convex function of $\xi $. Moreover, $h(0,s)=1$ and, by \eqref{eq:difformula0}, \eqref{eq:Hpos2}, for $s\geq 2$, $\frac{\partial h}{\partial \xi}(0,s)<0$. Hence for each  $s$ there is a unique $\xi >0$ such that $h(\xi ,s)=1$.  Therefore the set 
				$$\tilde U=\{ \xi : \exists\, \a >1 \text{ s.t. }  h(\xi ,\a)=1 \}$$ is not empty, and the mapping $\xi \to \alpha(\xi)$ is one-to-one on $\tilde{U}$.

				Let us prove that $\tilde U$ is open. Let $\xi _*\in U$ and $h(\xi _*, \a_*)=1$. Given $\xi $, $h(\xi , \cdot)$ is a strictly convex $C^1$ function of $s$ and $h(\xi , 0)=1$. Therefore,
				$\frac{\partial h}{\partial s}(\xi _*, \a_*)>0$ and the same holds in a neighbourhood of $(\xi _*, \a_*)$. Moreover, there is $\eps _0>0$ such that $h(\xi _*, \a_*-\eps _0 )<1$,  $h(\xi _*, \a_*+\eps _0 )>1$ and $\a_*-\eps _0 >1$. Let $\eta _0>0$ be such that for $0<\eta <\eta _0$, $h(\xi _*\pm \eta , \a_*-\eps _0)<1$, $h(\xi _*\pm \eta , \a_*+\eps _0)>1$ and  $\frac{\partial h}{\partial s}(\xi _*\pm \eta, \a_*\pm \eps )>0$ for $0\leq \eta <\eta _0$, $0\leq \eps <\eps _0$. 
				This proves that for such $\eta $, we have $\alpha(\xi _*\pm \eta )>1$ and so $(\xi _*-\eta _0, \xi _*+\eta _0)\subset \tilde U$.   \\
				\textbf{Step 2}: Now we shall prove that 
				\begin{equation}\label{eq:rightend}
					\tilde U=(0,\xi _1)\quad \mbox{and}\quad \a(\xi _1)=1.	
				\end{equation}
				Being an open set, $\tilde U$ could possibly be a sum of disjoint open intervals. Let $I:=(\xi ', \xi '' )$ be an interval with $I \subset U$, and every open connected interval containing $I$ is not a subset of $U$, {\em i.e.}, $I$ is maximal. Let $\xi _*\in I$. 
				Then by strict convexity of both $h(\xi , \cdot)$ and 
				$h( \cdot ,s)$, we have $\frac{\partial h}{\partial \xi}(\xi _*, \a_*)$, $\frac{\partial h}{\partial s}(\xi _*, \a_*)>0$. Hence by the implicit function theorem, $\a(\xi)$ is a $\Cf^1$ function on $(\xi ', \xi '' )$ and 
				$$ 				\a'(\xi)<0 \quad \mbox{for}\quad \xi \in (\xi ',\xi '').$$
				This proves the first part of assertion \ref{k3} (except for the identification $I=(0,\xi)$.) \\
				\textbf{Step 3}:
				Both $\lim _{\xi \to (\xi '')^-}\a(\xi )$ and $\lim _{\xi \to (\xi ')^+}\a(\xi )$ exist with the latter being possibly unbounded. Suppose  $\a_1= \lim _{\xi \to (\xi '')^-}\a(\xi )>1$. Then $h(\xi '', \a_1)= \lim _{\xi \to (\xi '')^-}h(\xi ,\a(\xi ))=1$ which shows that $\xi ''\in \tilde U$ and $(\xi ', \xi '' )$ is not maximal. This proves the existence of $\xi _1$.  
				
				Suppose that $\a_1= \lim _{\xi \to (\xi ')^+}\a(\xi )$ is finite. Then $h(\xi ', \a_1)= \lim _{\xi \to (\xi ')^+}h(\xi ,\a(\xi ))=1$ which shows that either $\xi '=0$ or $\xi '\in U$ and thus $(\xi ', \xi '' )$  is not maximal. 
				
				Therefore, $ \lim _{\xi \to (\xi ')^+}\a(\xi )=\8 $ or $\xi '=0$. Let us prove that in fact $\xi '=0$ and $ \lim _{\xi \to 0^+}\a(\xi )=\8 $. Suppose that $\xi '>0$. The range of $\a(\xi)$ on $(\xi ', \xi '')$ is $(1,\8 )$ and so by strict convexity of $h(\cdot ,s')$ for given $s'>1$ 
				there is $0<\xi _*< \xi '$ such that $h(\xi _* ,s')<1$. On the other hand, $\lim _{s\to \8}h(\xi _*,s)=\8$ and so there is $s_*>s' $ such that $h(\xi _*,s_*)=1$. But for $s^*$ 
				there is also $\xi > \xi '$ such that $h(\xi ,s_*)=1$ which is not possible by strict convexity of $h(\cdot , s_*)$. This proves \eqref{eq:rightend} and hence assertion \ref{k2}.
				
				Similarly, $ \lim _{\xi \to 0^+}\a(\xi ) $ cannot be finite. Indeed, suppose that $\a_1= \lim _{\xi \to 0^+}\a(\xi )<\8 $.
				Then by \eqref{eq:Hpos2}, for every $s>\max (\a_1,2)$ there is $\xi $ close to $0$ such that $h(\xi , s)<1$. At the same time 
				there is $s<\a_1$ such that $h(\xi , s)=1$. Since $k(\xi , \cdot)$ is strictly convex, we get a contradiction. 
				
				Finally by strict convexity of $h(\cdot , s)$ for $s>1$, $\a(\xi )$ must be strictly smaller than 1 as far as $\xi >\xi _1$. 
			\end{proof} 
			
			\begin{remark}
				By \eqref{eq:difformula0}, the assumption $\E \langle He_1,e_1\rangle < 0$ is equivalent to $U$ being not empty and
				$\sup \{ s(\xi): \xi \in U\} >2$. Suppose however we know only that $\sup \{ s(\xi): \xi \in U\}>1$.
				Then the same argument as in the proof allows us to conclude that  $s$ is strictly decreasing on an interval $(0, \xi _1)$, $h(\xi _1,1)=1$ but this time $ \lim _{\xi \to 0 ^+}s(\xi )$ 
				may possibly be finite. 
			\end{remark}

			\section{Specific models}\label{sect:GaussianModel}
			
			\subsection{Hierarchy of assumptions}

			\begin{proof}[Proof of Lemma \ref{lem:hierarchyAssump}] We start by proving the first implication.
				By assumption, $a_i$ have a $d$-dimensional standard Gaussian distribution, hence $\od a_i$ has the same law as $a_i$ for any $\od \in O(d)$. Thus each $H_i$ is invariant under rotations. Further,
				$$ \od A \od^\t ~=~ \od \big( \Id - \frac{\eta}{b}\sum_{i=1}^b H_i\big) \od^\t ~=~ \Id - \frac{\eta}{b} \sum_{i=1}^b \od H_i\od^{\top} ~\eqdist~ \Id - \frac{\eta}{b} \sum_{i=1}^b H_i$$
				where we have used that $H_i$ are iid and that $\od$ is deterministic to replace each summand $\od H_i\od^\top$ by the random variable $H_i$ which is the same in law only. This proves the first part of \eqref{rotinv}.
				In order to show that $G_A$ contains a proximal matrix, note that $\supp(a_i) = \R^d$. Let $v \in \R^d$ be such that $\abs{v}^2 > \frac{2b}{\eta}$. Then $(v,0,0,\dots,0) \in \mathrm{supp}\big((a_1, \dots, a_b)\big)$, hence
				$ \Id - \frac{\eta}{b} v v^\top ~\in~G_A$ and $v$ is an eigenvector with corresponding eigenvalue $1-\frac{\eta}{b} \abs{v}^2 < -1$. Its eigenspace is one-dimensional, any $y \in \R^d$ that is orthogonal to $v$ is just mapped to itself (eigenvalue 1).

				Considering the second implication, it is proved in \cite[Proposition IV.2.5]{Bougerol1985} (applied with $p=1$) that \eqref{rotinv} implies condition (i-p). By \cite[Proposition 2.14]{Guivarch2016}, there is either no $G_A$-invariant cone, or there is a minimal $G_A$-invariant subset $M$ of $S$ that generates a proper cone $G_A$-invariant cone $C$. Using the rotation invariance, we can deduce from $G_A \cdot M \subset M\subset C$ that $\od^{-1}G_A \od \cdot M \subset M\subset C$ and hence $G_A \od \cdot M \subset \od M \subset \od C$ for all $\od \in O(d)$. Consequently, $G_A \cdot (\od M \cap M) \subset (\od M \cap M)$, and since $O(d)$ acts transitively on $S$, there is $\od$ such that $M \nsubseteq \od M$. Hence $\od M \cap M$ is $G_A$-invariant and strictly smaller than $M$; which contradicts the minimality of $M$.
			\end{proof}

			\subsection{The Gaussian model}

			\begin{proof}[Proof of Lemma \ref{lem:densityGauss}]
				Let us write $a_{i,k}$, $1 \le i \le b$, $1 \le k \le d$ for the $k$-th component of the $i$-th vector $a_i$, and note that $(a_{i,k})_{i,k}$ are i.i.d.\ standard Gaussian random variables.\\
				\textbf{Step 1}: Recalling that
				
				$$ a_i a_i^\top ~=~ \begin{pmatrix}
					a_{i,1}^2 & a_{i,1}a_{i,2} & a_{i,1}a_{i,3} & \dots & a_{i,1}a_{i,d} \\[.2cm]
					& a_{i,2}^2 & a_{i,2} a_{i,3} & \dots & a_{i,2}a_{i,d} \\[.2cm]
					&  & a_{i,3}^2 & \dots & a_{i,3}a_{i,d} \\[.2cm]
					& &  & \ddots &\vdots \\[.2cm]
					& & & & a_{i,d}^2
				\end{pmatrix},$$
				we obtain for $H=\sum_{i=1}^b a_i a_i^\top$ 
				that 
				$$ H_{\ell, \ell} = \sum_{i=1}^b a_{i, \ell}^2, \qquad H_{\ell,k } = \sum_{i=1}^b a_{i,\ell} a_{i,k} $$
				Introducing the $b$-dimensional vectors $x_k=(x_{k,i})_{i=1}^b$, $1 \le k \le d$, by  setting $x_{k,i}=a_{i,k}$, we see that
				$$ H_{\ell, \ell} = \abs{x_{\ell}}^2, \qquad H_{\ell,k } = \sum_{i=1}^b \skalar{x_\ell, x_k} $$
				Note that $x_k$, $1 \le k \le d$ are i.i.d.\ $b$-dimensional standard Gaussian vectors. 	
				Hence, $Z_\ell^2:=H_{\ell,\ell}$,$ 1 \le \ell \le d$ are independent and identically distributed random variables with a $\chi^2(b)$-distribution, and 
				$Y_\ell := \frac{x_\ell}{\abs{x_\ell}}$ has  the uniform distribution on $S^{b-1}$, due to the radial symmetry of the $b$-dimensional Gaussian distribution. Moreover, $Y_{\ell}$ is independent of $H_\ell$, so that we have that all the random variables $Z_{1}, \dots, Z_{d}, Y_{1}, \dots, Y_d$ are independent, and
				$$ H ~=~ \begin{pmatrix}
					Z_1^2 & Z_1Z_2 \skalar{Y_1,Y_2} &Z_1Z_3 \skalar{Y_1,Y_3} & \dots & Z_1Z_d \skalar{Y_1,Y_d} \\[.2cm]
					& Z_2^2 & Z_2Z_3 \skalar{Y_2,Y_3} & \dots & Z_2Z_d \skalar{Y_2,Y_d} \\[.2cm]
					&  & Z_3^2 & \dots & Z_3Z_d \skalar{Y_3,Y_d} \\[.2cm]
					& &  & \ddots &\vdots \\[.2cm]
					& & & & Z_d^2
				\end{pmatrix},$$
				The density of each $Z_\ell^2$ is given by the $\chi^2(b)$-density: $$f_Z(z)=\frac{1}{2^{b/2}\Gamma(b/2)} z^{b/2-1} e^{-z/2}, \qquad z>0.$$
				\textbf{Step 2}:  We may then condition on $Z_1^2, \dots, Z_d^2$, and analyze the joint density of the $d(d-1)$ inner products $U_{\ell ,k}:=\skalar{Y_\ell, Y_k}$, $1 \le \ell < k \le d$. This has been done in the proof of \cite[Theorem 4]{Stam1982}. Since $Y_\ell$, $1 \le \ell \le d$ are independent random vectors that have the uniform distribution on $S^{b-1}$, the assumptions of that theorem are satsified. 
				In the proof, a recursive formula for the joint density $p(u_{1,2}, \dots, u_{d-1,d})$ of  $\sqrt{b}\cdot U_{\ell ,k}$, $1 \le \ell < k \le d$ is obtained, which is continuous and positive (due to the assumption $b>d+1$) on $(-\sqrt{b},\sqrt{b})^{d(d-1)/2}$ and zero outside. 
				For ease of the reference, we quote the formula obtained there. Write $p^{(b)}_d$ for the joint density of the inner products of $d$ random unit vectors in $b$-dimensional space.
				Then for $d=2$, $b>3$,
				$$ p^{(b)}_2(u) = \frac{\pi^{-1/2} \Gamma(\tfrac{b}2)}{\Gamma(\tfrac{b-1}{2})} (1-u^2)^{\tfrac{b-3}{2}}$$
				and recursively, for $b>d+1$
				\begin{align*}
					& p_{d}^{(b)}(u_{1,2}, \dots, u_{d-1,d}) \\ =& (b \pi)^{-\tfrac12 (d-1)} \left(\frac{\Gamma(\tfrac{b}{2})}{\Gamma(\tfrac{b-1}{2})}\right)^{d-1} \prod_{i=2}^d \Big(1- \frac{u_{1,i}^2}{b}\Big)^{\tfrac12 (b-d-1)} p_{d-1}^{(b-1)}(w_{2,3}, \dots, w_{r-1,r})
				\end{align*}
				where
				$$ w_{\ell,k} = \frac{u_{i,j}-b^{-\tfrac12}u_{1,i}u_{1,j}}{(1-b^{-1}u_{1,i}^2)^{1/2}(1-b^{-1}u_{1,j}^2)^{1/2}} \quad 2 \le \ell < k \le d. $$
				We obtain that $H_{\ell, \ell}=Z_\ell^2$ and $H_{\ell,k}=Z_\ell Z_k U_{\ell,k}$, where the vector $$(Z_1^2, \dots, Z_d^2, \sqrt{b}U_{1,2}, \dots \sqrt{b}U_{d-1,d})$$ has a density $g$ with respect to Lebesgue measure on $\R^{d(d+1)/2}$ given by
				$$ g(z_1, \dots, z_d, u_{1,2}, \dots, u_{d-1,d}) ~=~ C p^{(b)}_d(u_{1,2}, \dots, u_{d-1,d}) \prod_{\ell=1}^d z_\ell^{b/2-1} e^{-z_\ell/2}. $$

				\textbf{Step 3}: A density of $(Z_iZ_j U_{i,j})$ is then obtained by first considering the conditional density of $\sqrt{z_iz_j} U_{i,j}$ for fixed $Z_i^2=z_i$, $Z_j^2=z_j$, which is obtained from $p(\dots, u_{i,j}, \dots)$ by the change of measure $\widetilde{u_{i,j}}=\frac{\sqrt{z_i z_j}}{\sqrt{b}} u_{i,j}$, hence $du_{i,j}$ becomes $\frac{\sqrt{b}}{\sqrt{z_i z_j}} du_{i,j}.$
				We thus finally obtain that the entries $(H_{\ell,k})_{1 \le \ell \le k \le d}$ have the density
				$$ f(y) =C \cdot b^{d(d-1)/4}  \cdot p^{(b)}_d(\frac{\sqrt{b}}{\sqrt{y_{1,1}y_{2,2}}}y_{1,2}, \dots, \frac{\sqrt{b}}{\sqrt{y_{d-1,d-1}y_{d,d}}}y_{1,2}y_{d-1,d}) \prod_{\ell=1}^d y_{\ell,\ell}^{(b-d+1)/2-1} e^{-y_{\ell,\ell}/2} $$
				where $y=(y_{\ell,k})_{1 \le \ell \le k \le d}$. Due to the Cauchy-Schwartz-inequality, the off-diagonal entries of a positive semi-definite matrix (as H is), are always bounded by the square root of the product of the corresponding diagonal entries, i.e., $y_{\ell,k} \le \sqrt{y_{\ell,\ell} y_{k,k}}$. This gives that 
				$$ \abs{y}^2 \le (\sum_{\ell=1}^d y_{\ell,\ell}^2)^2$$
				Since $p_d^{(b)}$ is bounded, we finally obtain (due to the exponential term) that 
				$$ f(y)\leq C\left ( 1+\abs{y}^2\right )^{-D}$$
				for any $D>0$.
			\end{proof}

			\begin{proof}[Proof of Corollary \ref{cor:Rank1Gauss}]
				We start by noting that $$ Ax+B=x ~\Leftrightarrow~ x=(\Id-A)^{-1}B=\Big(\sum_{i=1}^b a_i a_i^\top\Big)^{-1} \sum_{i=1}^b a_i y_i $$
				Since $y_i$ are independent of $a_i$, this identity cannot hold $\P$-a.s. for a fixed $x \in \R^d$, hence $P(Ax+B=x)<1$ for all $x \in \R^d$.
				By Lemma \ref{lem:hierarchyAssump}, \eqref{Rank1Gauss} implies \eqref{rotinv} and \ipnc. By Lemma \ref{lem:densityGauss}, condition \eqref{decay} is fulfilled and hence Theorem \ref{th:integrab} applies and asserts that the moment conditions of all theorems are satisfied.
				Finally, $H=\frac{\eta}{b}\sum_{i=1}^b a_ia_i^\top$ is positive semi-definite and nontrivial, hence $\E \skalar{He_1, e_1}>0$ holds as well. Thus all assumptions of the theorems are satisfied.
				
				The dependence of $\alpha$ on $\eta$ is a direct consequence of Theorem \ref{lem:densityGauss}. Note that the dependence on $b$ does not follow from that theorem, since changing $b$ also changes the law of $H$. Instead, this follows as in \cite[C.3, part (II)]{Guerbuezbalaban2021supp}. Note that the argument given there remains valid under \eqref{rotinv}.
			\end{proof}

			\section{Integrability of $\| A^{-1}\|^{\delta } $}  \label{sect:momentbounds} 
		
		In this section, we study integrability of $N(A)$, in particular of moments of $\norm{A^{-1}}$ or - similarly - negative moments of $\det(A)$. The approach taken here is applicable to general classes of matrices and relies on the following general result about integrability of polynomials.

		\begin{theorem}\label{th:integrability} 
			Let $W(x_1,...,x_m)$ be a polynomial on $\R ^m$ such that the maximal degree of $x_i$ in $W$ is at most 2 for every $i$ and $W$ is not identically zero. Let $f$ be a nonnegative function of $\R ^m$ such that $f(x)\leq C(1+| x| ^2)^{-D}$ for $D>m\slash 2$. Then for every $\eps <1\slash 2$
			\begin{equation}\label{eq:W}
				\int _{\R ^m}|W(x)|^{-\eps }f(x)\ dx<\8.
			\end{equation} 
			If the maximal degree of $x_i$ in $W$ is at most 1 for all $i$, then \eqref{eq:W} holds for every $\epsilon<1$.
		\end{theorem}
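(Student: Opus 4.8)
The plan is to reduce the multivariate statement to a one-dimensional integration by isolating one variable in which $W$ genuinely depends, and then to integrate first in that variable using an explicit bound for $\int |q(t)|^{-\eps}\,dt$ over a quadratic (resp.\ linear) polynomial $q$, before integrating out the remaining variables against the tail bound on $f$. Concretely, since $W$ is not identically zero, there is some index, say $i=1$ after relabelling, such that $W$ is not identically zero as a polynomial in $x_1$ (with the other variables treated as parameters). Write $x=(x_1,x')$ with $x'\in\R^{m-1}$, and for fixed $x'$ regard $W(x_1,x')=c_2(x')x_1^2+c_1(x')x_1+c_0(x')$, a polynomial of degree $\le 2$ in $x_1$.

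The key one-dimensional estimate is the following: if $q(t)=\beta_2 t^2+\beta_1 t+\beta_0$ is not identically zero, then for $\eps<1/2$ one has $\int_{|t|\le T}|q(t)|^{-\eps}\,dt \le C(\eps)\,T^{1-2\eps}\,M(q)^{-\eps}$, or more usefully a bound of the form $\int_{\R}|q(t)|^{-\eps}(1+t^2)^{-\kappa}\,dt<\infty$ for suitable $\kappa$, with the blow-up controlled near the (at most two) real roots of $q$. Near a simple root the singularity is $|t-t_0|^{-\eps}$, which is integrable for $\eps<1$; near a double root it is $|t-t_0|^{-2\eps}$, integrable precisely for $\eps<1/2$ --- this is the source of the two thresholds in the statement. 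If $\deg_{x_1}W\le 1$ for all $i$, only simple-root singularities occur, giving the $\eps<1$ range. One must also track how the constant in this estimate depends on $x'$, i.e.\ on the coefficients $c_0(x'),c_1(x'),c_2(x')$: the bound should be polynomial in $|x'|$ (since the $c_j$ are polynomials in $x'$), up to a harmless negative power coming from the leading coefficient which one absorbs or handles by splitting the $x'$-integral into the region where $|c_2(x')|$ is bounded below and its complement.

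I would then carry out the argument in the order: (1) prove the one-variable lemma with explicit constants, distinguishing the degree-$1$ and degree-$2$ cases; (2) apply Fubini to write the integral in \eqref{eq:W} as $\int_{\R^{m-1}}\big(\int_\R |W(x_1,x')|^{-\eps}\,f(x_1,x')\,dx_1\big)\,dx'$, bounding $f(x_1,x')\le C(1+x_1^2)^{-D}(1+|x'|^2)^{-D'}$ is \emph{not} available directly, so instead I would use $f(x)\le C(1+|x|^2)^{-D}$ and split: on the region $|x_1|\ge |x'|$ use $(1+|x|^2)^{-D}\le (1+x_1^2)^{-D}$ together with the one-variable lemma in $x_1$ (with the extra decay giving convergence at infinity since $D>m/2\ge 1/2$), and on the region $|x_1|<|x'|$ the $x_1$-integral is over a bounded interval of length $\lesssim |x'|$ so the lemma gives a bound $\lesssim |x'|^{1-2\eps}\cdot(\text{coeff factors})$, after which one integrates in $x'\in\R^{m-1}$ against $(1+|x'|^2)^{-D}$ and the polynomial coefficient growth, which converges because $D>m/2$. (3) Combine the two regions.

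The main obstacle I anticipate is bookkeeping the dependence of the one-dimensional constant on the parameters $x'$ uniformly, in particular handling the degenerate locus where the leading coefficient $c_2(x')$ vanishes or is small --- there $W(\cdot,x')$ behaves like a linear (or constant) polynomial, and one needs a bound that interpolates gracefully rather than blowing up. A clean way around this is to choose the distinguished variable $x_1$ \emph{adaptively} or to symmetrize over all variables: for each $x$ there is at least one coordinate direction in which the relevant partial polynomial is nondegenerate of the expected degree on a set of full measure, so one can cover $\R^m$ by finitely many regions (indexed by which coordinate one integrates out first) and apply the one-variable estimate in each. Verifying that these regions genuinely cover $\R^m$ up to measure zero, using that $W\not\equiv 0$, is the technical heart; once that is in place, the decay hypothesis $D>m/2$ does exactly the work needed to make every resulting $(m-1)$-dimensional integral converge.
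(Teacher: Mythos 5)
There is a genuine gap. You have the right first move (integrate out one variable, tracking a degree-2 singularity) and you correctly identify the degenerate locus where the leading coefficient $c_2(x')$ vanishes as the crux — but your proposed ways around it (adaptive choice of distinguished variable, or covering $\R^m$ by regions indexed by which coordinate is ``nondegenerate'') are not carried out and, as stated, are unlikely to close. The difficulty is that degeneracy in $x_1$ is not a fixed-measure-zero phenomenon one can route around by switching variables: for instance $W(x_1,x_2)=x_1x_2$ is degenerate in $x_1$ precisely on $\{x_2=0\}$, degenerate in $x_2$ precisely on $\{x_1=0\}$, and the size of the leading coefficient varies continuously, so there is no clean finite partition into regions where a fixed coordinate works with a uniform constant.

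The paper's proof dissolves this obstacle by a different and cleaner mechanism, and it is worth seeing why. Instead of an integral over $|t|\le T$ or over all of $\R$, one uses a \emph{unit-interval} bound (Greenblatt's Lemma 2.3): writing $W(x)=P(\bar x)x_1^2+P_1(\bar x)x_1+P_0(\bar x)$ with $P$ the leading coefficient in $x_1$, for every $n\in\N$
\begin{equation*}
  \int_n^{n+1}|W(x_1,\bar x)|^{-\eps}\,dx_1 \le C(\eps)\,|P(\bar x)|^{-\eps},
\end{equation*}
with $C(\eps)$ \emph{independent of $n$ and of $\bar x$}. One then sums over $n$ using H\"older's inequality against the decay of $f$, obtaining a bound of the form $|P(\bar x)|^{-\eps}\bar f(\bar x)$ with $\bar f(\bar x)\lesssim (1+|\bar x|^2)^{-(D-1/2)}$. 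The crucial point you did not reach: the factor $|P(\bar x)|^{-\eps}$ is not something to be bounded or absorbed; it becomes the \emph{new} integrand. Since $P$ is again a polynomial with degree at most $2$ in each remaining variable (it is the coefficient of $x_1^2$ in $W$), and $D-\tfrac12>\tfrac{m-1}{2}$, one applies the induction hypothesis in $m-1$ variables to $\int_{\R^{m-1}}|P(\bar x)|^{-\eps}\bar f(\bar x)\,d\bar x$. The degeneracy of $W$ in $x_1$ is exactly the smallness of $P(\bar x)$, and the recursion handles it automatically instead of requiring a region decomposition. This is the missing idea: push the degeneracy into the induction rather than trying to avoid it. Your proposal, by contrast, tries to get a single-shot $x'$-uniform bound, which is why you end up stuck on the leading-coefficient locus.
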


		\begin{proof}[Proof of Theorem \ref{th:integrability}]
			We use induction with respect to the number $m$ of variables. We only provide the induction step, since the proof for $m=1$ is along the same lines (omitting dependence on $\bar{x}$ in all calculations).
			To wit, we are going to prove that
			\begin{equation}\label{eq:WP}
				\int_{\R^m} |W(x)|^{-\epsilon} f(x)\ dx \le C' \int_{\R^{m-1}} |P(\bar{x})|^{-\epsilon} \bar{f}(\bar x) \ d \bar x 
			\end{equation}
			for a constant $C'$, a polynomial $P$ with the same properties as $W$, and a nonnegative function $\bar{f}$ that satisfies $\bar{f}(\bar x) \le C'' (1 + \abs{x}^2){-D'}$ for $D'=D-\frac12>{(m-1)/2}$.
			
			\textbf{Step 1}: Fix $\epsilon \in (0,\tfrac12)$. Let $x=(x_1,\bar x)\in \R ^m$ for a variable $x_1 \in \R$ that appears in $W$ [in case that $W$ does not depend on $x_1$, then \eqref{eq:Winterval} holds automatically with $C(\epsilon)=1$ and $P(\bar x)=W(\bar x)$]. Then we may write $W(x)=P(\bar x)x_1^2+P_1(\bar x)x_1+P_0(\bar x)$ or $W(x)=P(\bar x)x_1+P_0(\bar x)$ with $P$ being not identically zero and satisfying the induction assumption. We will prove \eqref{eq:WP} with for this $P$. More precisely, we will show the inequality on the set $\{ \bar{x} \in \R^{m-1} \, : \, P(\bar x) \neq 0\}$ and use the fact that its complement has $m-1$-Lebesgue measure zero (this is true for any polynomial that is not identically zero).

			We employ Lemma 2.3 in \cite{greenblatt:2005}, which gives that there is a constant $C(\eps)$ such that for every $n \in \N,$
			\begin{equation}\label{eq:Winterval}
				\int _n^{n+1}|W(x_1,\bar x)|^{-\eps}\ dx_1\leq C(\eps )|P(\bar x)|^{-\eps}
			\end{equation}
			as long as $P(\bar x)\neq 0$ -- note that we may exclude the case $P(\bar x)= 0$ by the above discussion.
			For $p>0$ such that $p\eps <1\slash 2$ we have by an application of H\"older's inequality
			\begin{align*}
				\int _{\R }|W(x_1,\bar x)|^{-\eps }f(x_1,\bar x)\ dx_1&=\sum _{n=-\8}^{\8}\int _n^{n+1}|W(x_1,\bar x)|^{-\eps }f(x_1,\bar x)\ dx_1 \\
				&\leq \sum _{n=-\8}^{\8}\left (\int _n^{n+1}|W(x_1,\bar x)|^{-p\eps }\right )^{1\slash p}
				\left (\int _n^{n+1}f(x_1,\bar x)^q\ dx_1\right )^{1\slash q}\\
				&\leq C(\eps p)^{1\slash p}|P(\bar x)|^{-\eps }\sum _{n=-\8}^{\8}
				\left (\int _n^{n+1}f(x_1,\bar x)^q\ dx_1\right )^{1\slash q},  
			\end{align*} 
			with $q=\frac{p}{p-1}$.
			
			\textbf{Step 2}: It remains to prove that
			\begin{equation*}
				\bar f(\bar x) =\sum _{n=-\8}^{\8}
				\left (\int _n^{n+1}f(x_1,\bar x)^q\ dx_1\right )^{1\slash q} \leq C (1+| \bar x| ^2)^{-D_1}, \quad D_1>\frac{m-1}{2}.
			\end{equation*}
			Changing variables $x_1=\sqrt{1+| \bar x| ^2}u$, we have 
			\begin{align*}
				\int _n^{n+1}f(x_1,\bar x)^q\ dx_1 &\leq C\int _n^{n+1}(1+x_1^2+| x| ^2)^{-Dq}\ dx_1\\
				&=C(1+|\bar x| ^2)^{-Dq+1\slash 2}\int _{n\slash \sqrt{1+| \bar x| ^2}}^{(n+1)\slash \sqrt{1+| \bar x| ^2}}(1+u^2)^{-Dq} \ du\\
				&\leq C(1+| \bar x| ^2)^{-Dq}\left (1+n^2\left (1+| \bar x| ^2\right )^{-1}\right )^{-Dq}
			\end{align*}
			Hence
			\begin{align*}
				\bar f(\bar x) &=\sum _{n=-\8}^{\8}\left (\int _n^{n+1}f(x_1,\bar x)^q\ dx_1\right )^{1\slash q}\\
				& \leq C (1+| \bar x| ^2)^{-D}\sum _{n=-\8}^{\8}\left (1+n^2\left (1+| \bar x| ^2\right )^{-1}\right )^{-D}\\
				& \leq C (1+| \bar x| ^2)^{-D}\left (1+\sqrt{1+| \bar x| ^2} \int _{-\8}^{\8}(1+u^2)^{-D} \ du \right )\\
				&\leq C (1+| \bar x| ^2)^{-D+1\slash 2}
			\end{align*}
			and $D-1\slash 2>\frac{m-1}{2}$. This concludes the proof of \eqref{eq:W}.
			
			\textbf{Step 3}: It remains to prove the final assertion of the Theorem. If $W(x)=P(\bar x)x_1+P_0(\bar x)$ the induction step is a simple change of variables and it holds for any $\eps <1$. Indeed, for $\eps p<1$ we have
			\begin{align*}
				& \int _{\R }|W(x_1,\bar x)|^{-\eps }f(x_1,\bar x)\ dx_1 \\
				\leq& \left (\int _{\{x_1: |W(x_1,\bar x)|\leq 1\}}|P(\bar x)x_1+P_0(\bar x)|^{-p\eps }\right )^{1\slash p}
				\left (\int _{\R }f(x_1,\bar x)^q\ dx_1\right )^{1\slash q}
				+\int _{\R }f(x_1,\bar x)\ dx_1\\
				\leq&  C|P(\bar x)|^{-1\slash p }\left (\int _{\R }f(x_1,\bar x)^q\ dx_1\right )^{1\slash q}
				+\int _{\R }f(x_1,\bar x)\ dx_1  
			\end{align*} 
			and from here one can proceed as in Step 2.
		\end{proof}

		\begin{proof}[Proof of Theorem \ref{th:integrab}]
			As discussed before, the proof will rely on several applications of Theorem \ref{th:integrability} together with the assumed decay properties.
			\\
			\textbf{Step 1}: We consider the determinant $\det A=\det(I-\xi H)$ as a polynomial in the (random) entries of $H$. Note that these entries have a joint density due to assumption \eqref{decay} or \eqref{decay1}, respectively. In particular, the determinant is a polynomial in $d(d+1)$ variables (case \eqref{Symm}) or in $db $ variables (case \eqref{Rank1}). The constant term in $\det A$  is 1 so the polynomial $\det A$ is not identically zero. Each summand of the determinant is a product of different entries, hence the maximal degree of any variable in it is less or equal to two -- it can be two because off-diagonal entries appear twice due to symmetry. 
			
			We may thus apply Theorem \ref{th:integrability} with $W=\det A$, observing that the \eqref{decay} and \eqref{decay1} guarantee the required properties of $f$. We thus obtain \eqref{eq:det}.
			\\
			\textbf{Step 2}: First note that
			$$\E (1+\norm{A}^s) N(A)^\epsilon \le \E \norm{A}^\epsilon + \E \norm{A^{-1}}^\epsilon + \E \norm{A}^{s+\epsilon} + \E \norm{A}^s \norm{A^{-1}}^\epsilon. $$
			For the last term, we use H\"older's inequality with $p>1$ chosen such that $ps < s_0$, to obtain
			$$ \E \norm{A}^s \norm{A^{-1}}^\epsilon \le \Big( \E \norm{A}^{sp}\Big)^{\frac1p} \Big( \E \norm{A^{-1}}^{q\epsilon} \Big)^{\frac1q} $$
			with $q=\frac{p}{p-1}$. Hence, if we can show that $\E \norm{A^{-1}}^\delta < \infty$ for some $\delta>0$, then we may choose $\epsilon = \frac{\delta}{q}<\delta$ and obtain finiteness of $\E (1+\norm{A}^s) N(A)^\epsilon$. To prove finiteness of $\E \norm{A^{-1}}^\delta$, we compare the norm with the determinant, as follows. For any $x \in S$,
			\begin{equation*}
				| Ax| \geq |\lambda _1|=\frac{|\det A |}{|\lambda _2\cdots \lambda _d|}, 
			\end{equation*}
			where $\lambda _1\dots \lambda _d$ denote the eigenvalues of $A$, $|\lambda _1|\leq ... \leq |\lambda _d|\le \norm{A}$. Hence
			\begin{equation*}
				\norm{A^{-1}}=\Big(\inf_{x \in \S} \abs{Ax}\Big)^{-1} = \sup_{x \in \S} | Ax| ^{-1} \leq \| A\| ^{d-1}|\det A | ^{-1}. 
			\end{equation*}
			Therefore, using the Cauchy-Schwartz inequality,
			\begin{align*}
				\E \| A^{-1}\| ^{\delta}\leq & \E \| A\| ^{(d-1)\delta}|\det A |^{-\delta}\\
				\leq & \left (\E \| A\| ^{2(d-1)\delta}\right )^{1\slash 2}\left (\E |\det A |^{-2\delta}\right )^{1\slash 2}.
			\end{align*} 
			This gives the desired finiteness of $\E \norm{A^{-1}}^\delta$ with some $\delta < \frac14$, using the finiteness of $\E |\det A|^{-2\delta}$ that was proved in the first step.
			\\
			\textbf{Step 3}: In order to show that $\E |\langle H e_2, e_1\rangle |^{-\delta}<\infty$ for all $\delta \in (0,1)$, we observe that $\langle H e_2, e_1\rangle = x_{12}$ for model \eqref{Symm} or $\langle H e_2, e_1\rangle = \sum _{i=1}^ba_{i,1}a_{i,2}$ for model \eqref{Rank1}, respectively. In particular, it is a polynomial with maximal degree of any variable being one in both cases. Hence, the second part of Theorem \ref{th:integrability} applies and yields the desired result.
		\end{proof}

			\nocite{Gao2015,Guerbuezbalaban2021,Guivarch2016,Guerbuezbalaban2021supp}

\providecommand{\bysame}{\leavevmode\hbox to3em{\hrulefill}\thinspace}
\providecommand{\MR}{\relax\ifhmode\unskip\space\fi MR }
\providecommand{\MRhref}[2]{%
	\href{http://www.ams.org/mathscinet-getitem?mr=#1}{#2}
}
\providecommand{\href}[2]{#2}

\end{document}